\newcolumntype{g}{>{\columncolor[gray]{0.85}}c}
\newcommand{\SM}{\mathcal{M}}
\newcommand{\SN}{\mathcal{N}}
\newcommand{\SG}{\mathcal{G}}
\newcommand{\SL}{\mathcal{L}}
\newcommand{\SX}{X}
\newcommand{\BF}{\mathbb{F}}
\newcommand{\BV}{\mathbb{V}}
\newcommand{\BE}{\mathbb{E}}
\newcommand{\BG}{\mathbb{G}}
\newcommand{\FG}{\mathsf{G}}
\newcommand{\FV}{\mathsf{V}}
\newcommand{\FE}{\mathsf{E}}
\newcommand{\FT}{\mathsf{T}}
\newcommand{\FL}{\mathsf{L}}
\newcommand{\IPSLP}{IRPS-LP}
\newcommand{\R}{\mathbb{R}}
\newcommand{\N}{\mathbb{N}}
\newcommand{\la}{\langle}
\newcommand{\ra}{\rangle}
\newcommand{\T}{\top}
\newcommand{\fdots}{\makebox[0.75em][c]{.\hfill.}\thinspace} 
\DeclareMathOperator*{\argmin}{arg min}
\DeclareMathOperator*{\argmax}{arg max}
\DeclareMathOperator*{\conv}{conv}
\providecommand{\abs}[1]{\lvert#1\rvert}
\definecolor{fettrot}{RGB}{255,10,10}
\newtheorem{definition}{Definition}
\newtheorem{lemma}{Lemma}
\newtheorem*{lemma*}{Lemma}
\newtheorem{proposition}{Proposition}
\newtheorem*{proposition*}{Proposition}
\newtheorem*{theorem*}{Theorem}
\declaretheoremstyle[bodyfont=\normalfont]{normalbody}
\declaretheorem[style=normalbody,name=Example]{example}
\newcommand{\myparagraph}[1]{\paragraph{#1}}
\def\smallunderbrace#1{\mathop{\vtop{\m@th\ialign{##\crcr
   $\hfil\displaystyle{#1}\hfil$\crcr
   \noalign{\kern3\p@\nointerlineskip}%
   \tiny\upbracefill\crcr\noalign{\kern3\p@}}}}\limits}
\renewcommand{\paragraph}{%
  \@startsection{paragraph}{4}%
  {\z@}{1.25ex \@plus 1ex \@minus .2ex}{-1.9em}%
  {\normalfont\normalsize\bfseries}%
}
\begin{document}


\title{A Study of Lagrangean Decompositions and Dual Ascent Solvers\\ for Graph Matching}

\author{Paul Swoboda, Carsten Rother, Hassan Abu Alhaija, Dagmar Kainm\"uller, Bogdan Savchynskyy}

\maketitle

\begin{abstract}
We study the quadratic assignment problem, in computer vision also known as graph matching.
Two leading solvers for this problem optimize the Lagrange decomposition duals with sub-gradient and dual ascent (also known as message passing) updates. We explore this direction further and propose several additional Lagrangean relaxations of the graph matching problem along with corresponding algorithms, which are all based on a common dual ascent framework. 
Our extensive empirical evaluation gives several theoretical insights and suggests a new state-of-the-art anytime solver for the considered problem. 
Our improvement over state-of-the-art is particularly visible on a new dataset with large-scale sparse problem instances containing more than $500$ graph nodes each.
\end{abstract}

\section{Introduction}
\label{sec:intro}
In computer vision and beyond, the quadratic assignment problem, known also as {\em graph matching}, {\em feature correspondence} and {\em feature matching}, has attracted great interest. 
This problem is similar to Maximum-A-Posteriori (MAP) inference on a discrete pairwise graphical model, also called conditional random field~(CRF) in the literature.
It differs in an additional uniqueness constraint: Each label can be taken at most once.
This uniqueness constraint makes it well-suited to attack e.g. tracking problems or shape matching.
In both cases feature points or object parts have to be matched between multiple frames one-to-one.
Unfortunately, the uniqueness constraint prevents naive application of efficient message passing solvers for MAP-inference to this problem.
For this reason, many dedicated graph matching solvers were developed, see related work below.

On the other hand, efficient dual block-coordinate ascent (also known as message passing) algorithms like TRW-S~\cite{TRWSKolmogorov} count among the most efficient solvers for MAP-inference in conditional random fields. Also, the graph matching problem, after possibly introducing many additional variables, can be stated as a MAP-inference problem in a standard pairwise CRF.
Such an approach already surpasses most state-of-the-art graph matching solvers.

Hence, it is desirable to devise specialized convergent message passing solvers exhibiting none of the drawbacks discussed above, i.e.\ 
(i)~directly operating on a compact representation of the graph matching problem.
(ii)~using techniques from the MAP-inference community to gain computational efficiency and

To achieve this goal, we propose (i)~several Lagrangean decompositions of the graph matching problem and (ii)~novel efficient message passing solvers for these relaxations.
We show their efficacy in an extensive empirical evaluation.


\myparagraph{Related work}
The term {\em graph matching} refers to a number of different optimization problems in pattern recognition, see~\cite{ThirtyYearsOfGraphMatching} for a review. 
We mean the special version known unambiguously as {\em quadratic assignment problem} (QAP)~\cite{lawler1963quadratic}.
Recently, the graph matching was generalized to the {\em hypergraph matching} problem (see~\cite{nguyen2015flexible} and references therein), which match between more than two graphs.

The quadratic assignment problem was first formulated in~\cite{QAPBeckman} back in $1957$. Since a number of NP-complete problems such as traveling salesman, maximal clique, graph isomorphism and graph partitioning can be straightforwardly reduced to QAP, this problem is NP-hard itself. Its importance for numerous applications boosted its analysis a lot: The (already aged) overview~\cite{AnalyticalSurveyQAP} contains $362$ references with over $150$ works suggesting new algorithms and over $100$ with new theoretical results related to this problem.

Nearly all possible solver paradigms were put to the test for QAP.
These include, but are not limited to, convex relaxations based on Lagrangean decompositions~\cite{karisch1995lower,GraphMatchingDDTorresaniEtAl}, linear~\cite{frieze1983quadratic,adams1994improved}, convex quadratic~\cite{anstreicher2001new} and semi-definite~\cite{zhao1998semidefinite,MRFSemidefiniteTorr,ProbabilisticSubgraphMatchingSchellewald} relaxations, which can be used either directly to obtain approximate solutions or just to provide lower bounds.
To tighten these bounds several cutting plane methods were proposed~\cite{bazaraa1979new,bazaraa1982use}.
On the other side, various primal heuristics, both (i) deterministic, such as local search~\cite{pardalos1993computational,TabuSearchGraphMatching}, graduated assignment methods~\cite{GraduatedAssignmentGold}, fixed point iterations~\cite{IntegerFixedPointGraphMatching}, spectral technique and its derivatives~\cite{SpectralTechniqueAssignmentLeordeanu,cho2014finding,UmeyamaGraphMatching,EigenDecompositionGraphMatchingZhao} and (ii) stochastic, like random walk~\cite{RandomWalksForGraphMatching} and Monte-Carlo sampling~\cite{lee2010graph,GraphMatchingSequentialMonteCarlo} were suggested to provide approximate solutions to the problem.
Altogether these methods serve as building blocks for exact branch-and-bound~\cite{gavett1966optimal,hahn1998branch,anstreicher2001solving} algorithms and other non-convex optimization methods~\cite{zaslavskiy2009path,FactorizedGraphMatching,GraduatedAssignmentGold}. The excellent surveys~\cite{AnalyticalSurveyQAP,BurkardQAP} contain further references.

As is usual for NP-hard problems, no single method can efficiently address all QAP instances. Different applications require different methods and we concentrate here on problem instances specific for computer vision. Traditionally within this community predominantly primal heuristics are used, since demand for low computational time usually dominates the need to obtain optimality guarantees. However, two recently proposed solvers~\cite{GraphMatchingDDTorresaniEtAl,HungarianBP} based on Lagrangean  decomposition (also known as {\em dual decomposition} in computer vision) have shown superior results and surpassed numerous state-of-the-art primal heuristics. 

{\em The dual decomposition solver}~\cite{GraphMatchingDDTorresaniEtAl} represents the problem as a combination of MAP-inference for binary CRFs, the linear assignment problem and a number of small-sized QAPs over few variables; Lagrangean multipliers connecting these subproblems are updated with the sub-gradient method. 
Although the solver demonstrates superior results on computer vision datasets, we suspect that its efficiency can be further improved by switching to a different update method, such as bundle~\cite{kiwiel1990proximity,kappes2012bundle} or block-coordinate ascent~\cite{wright2015coordinate}. 
This suspicion is based on comparison of such solvers for MAP-inference in CRFs~\cite{OpenGMBenchmark}
and similar observation related to other combinatorial optimization problems (see e.g.~\cite{ribeiro1986solving}).

{\em Hungarian Belief Propagation (HBP)}~\cite{HungarianBP} considers a combination of a multilabel CRF and a linear assignment as subproblems; Lagrange mutipliers are updated by a block-coordinate ascent (message passing) algorithm and the obtained lower bounds are employed inside a branch-and-bound solver. 
It is known~\cite{SRMPKolmogorov}, however, that efficiency of message passing significantly depends on the schedule of sending messages. 
Specifically, efficiency of dual ascent algorithms depends on selecting directions for the ascent (blocks of variables to optimize over) and the order in which these ascent operations are performed. 
Arguably, the underlying multilabel CRF subproblem is crucial and the message passing must deal with it efficiently.
However, HBP~\cite{HungarianBP} uses a message passing schedule similarly as in the MPLP algorithm~\cite{MPLP}, which was shown~\cite{SRMPKolmogorov,OpenGMBenchmark} to be significantly slower than the schedule of SRMP (TRW-S)~\cite{SRMPKolmogorov}.

\myparagraph{Contribution}
We study several Lagrangean decompositions of the graph matching problem. Some of these are known, e.g. the one used in the HBP algorithm~\cite{HungarianBP} and the one corresponding to the local polytope relaxation of the pairwise CRF representation of graph matching. The others have not been published so far, to our knowledge. 
For all these decompositions we provide efficient message passing (dual ascent) algorithms based on a recent message passing framework~\cite{ConvergentMessagePassingNIPS}. 
In the case of the local polytope relaxation our algorithm coincides with the SRMP method~\cite{SRMPKolmogorov}, a higher-order generalization of the famous TRW-S algorithm~\cite{TRWSKolmogorov}.  

Our experimental evaluation suggests a new state-of-the-art method for the graph matching problem, which outperforms both the dual decomposition~\cite{GraphMatchingDDTorresaniEtAl} and the HBP~\cite{HungarianBP} solvers. 
We propose tighter convex relaxations for all our methods.
Also, we significantly improve performance of the HBP algorithm by changing its message passing schedule. 

Proofs are given in the appendix. 
Code and datasets are availabe at \url{http://github.com/pawelswoboda/LP_MP}.

\myparagraph{Notation.}
Undirected graphs are denoted by $G=(V,E)$, where $V$ is a finite {\em node set} and $E\subseteq{{V}\choose{2}}$ is {\em the edge set}.
The set of neighboring nodes of $v \in V$ w.r.t. graph $G$ is denoted by $\SN_G(v) := \{ u: uv \in E\}$.
The convex hull of a set $\SX \subset \R^n$ is denoted by $\conv(\SX)$.


\section{CRFs and Graph Matching} 
\label{sec:GraphMatching}


First, we introduce conditional random fields and state the graph matching problem as one with additional uniqueness constraints.
Second, we consider an inverse formulation of the graph matching problem, which, after being coupled with the original formulation, often leads to faster algorithms.

\myparagraph{Conditional random fields (CRF).}

Let $\FG = (\FV,\FE)$ be an undirected graph. 
With each node $u\in \FV$ we associate a variable~$x_u$ taking its values in a finite {\em set of labels} $\SX_u \subseteq \{(1,0,\ldots,0),(0,1,0,\ldots,0),\ldots,(0,\ldots,0,1)\}$. 
Hence, each label corresponds to a unit vector.
Notation $\SX_A$ denotes the Cartesian product $\prod_{u \in A\subseteq\FV} \SX_u$. 
A vector $x\in\SX_{\FV}$ with coordinates $(x_u)_{u\in\FV}$ is called a {\em labeling}. Likewise, we use the notation $x_A \in \SX_A$ (a special case being $x_{uv}\in\SX_{uv}\equiv\SX_u\times\SX_v$) to indicate part of a labeling.
Functions $\theta_u\colon \SX_u\to\R$, $u\in\FV$, and $\theta_{uv}\colon\SX_{uv}\to\R$, $uv\in\FE$, are {\em potentials}, which define a local quality of labels and label pairs. 

The {\em energy minimization} or {\em MAP-inference} problem for CRFs is
\begin{equation}
  \label{eq:energy-min}
  \min_{x \in \SX_{\FV}} \sum_{u \in \FV} \theta_u(x_u) + \sum_{uv \in \FE} \theta_{uv}(x_{uv})\,.
\end{equation}
The objective in~\eqref{eq:energy-min} is called {\em energy} of the CRF.

A great number of applied problems can be efficiently cast in the format~\eqref{eq:energy-min}, see e.g.~\cite{WainwrightBook,OpenGMBenchmark}.
This defines its importance for computer vision, machine learning and a number of other branches of science~\cite{WainwrightBook}.
While problem~\eqref{eq:energy-min} is NP-hard in general, many exact and approximate solvers were proposed~\cite{OpenGMBenchmark}.

\myparagraph{Graph Matching.}
Although the format of Problem~\eqref{eq:energy-min} allows us to express many practically important optimization tasks efficiently, some applications require the resulting labelings $x$ to satisfy additional constraints. 
In particular, for the graph matching problem no label may be taken twice.

Let a \emph{common universe} $\SL$ of labels be given such that $\SX_u \subseteq \SL$ $\forall u \in \FV$.
We require each label $s \in \SL$ to be taken at most once, i.e. $\abs{\{u \in \FV : x_u = s\}} \leq 1$.
In other words, we seek an injective mapping $(x_u)_{u \in \FV} :\FV \rightarrow \SL$.
This problem can be stated as
\begin{equation}
  \label{eq:GraphMatching}
  \min_{x \in \mathcal{X}_{\FV}} \sum_{u \in \FV} \theta_u(x_u) + \sum_{uv \in \FE} \theta_{uv}(x_{uv}) \quad \text{s.t. } x_u \neq x_v \ \forall u\neq v\,.
\end{equation}

Graph matching is NP-hard, since it is equivalent to MAP-inference for CRFs~\eqref{eq:energy-min} in the trivial case, when nodes of the graph contain mutually non-intersecting sets of labels.   

\myparagraph{Inverse Graph Matching}
A special case arises if the universe $\SL$ of labels to be matched has the same size as the set of nodes of the graph $|\SL| = |\FV|$.
Then every injective mapping $(x_u)_{u \in \FV} : \FV \rightarrow \SL$ must also be a bijection. 
Hence, every feasible labeling $x \in \SX_{\FV}$ corresponds to a permutation of $\FV$.
The graph matching problem~\eqref{eq:GraphMatching} can in this case also be approached in terms of the inverse permutation.
To this end let the {\em inverse graph} $\FG' = (\FV',\FE')$ be given by $\FV'=\SL$; the {\em inverse label set} $X'_{s} = \{v \in \FV : s \in X_v\}$ is associated with each node $s\in\FV'$; respectively $X'_{\SL} = \prod_{s \in \SL} X'_{s}$ is the set of {\em inverse labelings} and $X'_{st}$ denotes $X_s\times X_t$; 
the set of edges of the inverse graph is defined as $\FE' = \left\{ st\in \FV'\times \FV'\ :\ \exists x_{st} \in X'_{st} \text{ s.t. } x_{s} x_{t} \in \FE \right\}$.
The {\em inverse costs} $\theta'$ for $s,s'\in\FV'$, $x_s\in X'_s$ read:
\begin{equation*}
        \theta'_{s}(x_{s}) = \theta_{x_{s}}(s),
\ \
\theta'_{st}(x_{st}) = 
\left\{ \begin{array}{ll} 
        \theta_{x_{st}}(s,t), & x_{st} \in \FE \\
0, & \text{ otherwise.}
\end{array} \right.
\end{equation*}
Consider the resulting \emph{inverse graph matching} problem
\begin{equation}
  \label{eq:InverseGraphMatching}
  \min_{x \in X'_{\SL}} \sum_{s \in \SL} \theta'_{s}(x_{s}) + \sum_{st \in \FE'} \theta'_{st}(x_{st}) \quad \text{s.t. } x_{s} \neq x_{t}\  \forall s\neq t\,.
\end{equation}
Labeling $x \in \SX_{\FV}$ and inverse labeling $y \in X'_{\SL}$ correspond to each other iff 
$x_u = s \in \FV \Leftrightarrow y_{s} = u \in \FV$.

Note that when the edge set $\FE$ is sparse, the inverse edge set $\FE'$ may be not. 
In such a case, computational complexity of the inverse problem is higher than of the original one.

\section{Lagrangean Decompositions}
\label{sec:relaxation}

Since the graph matching problem~\eqref{eq:GraphMatching} is NP-hard, it is common to consider convex relaxations.
Below, we present three Lagrangean decompositon based relaxations of the problem. 
These can be applied to the original graph matching problem~\eqref{eq:GraphMatching}, to the inverse one~\eqref{eq:InverseGraphMatching} and to a combination of both.
Since all these relaxations are based on the famous local polytope relaxation~\cite{schlesinger1976syntactic,Werner07} of the MAP-inference for CRFs~\eqref{eq:energy-min}, we give a short overview of this relaxation first.

\myparagraph{Local Polytope for CRFs.} The MAP-inference problem~\eqref{eq:energy-min} can be represented as an integer linear program (ILP)~\cite{korte2012combinatorial} using an {\em overcomplete representation}~\cite{WainwrightBook} by grouping 
potentials corresponding to each node and edge into separate vectors. That is, $\theta_w(x_w)$, $w\in \FV\cup\FE$ stands for a vector with coordinates $(\theta_w(x_w))_{x_w\in X_w}$. The real-valued vectors $\mu_w$ have the same dimensionality as $\theta_w$ and stand for the "relaxed" version of~$x_w$. The corresponding linear programming (LP) relaxation reads:
\begin{align}\label{eq:LP:energy-min}
&\min\sum_{w\in\FV\cup\FE}\la\theta_w,\mu_w\ra\\
&\begin{array}{lll}
   \sum\limits_{x_w\in \SX_w} \mu_w(x_w) = 1,\ \mu_w(s)\ge 0,\  w\in \FV\cup\FE, s\in X_w\,,\\
   \sum\limits_{x_{v}\in \SX_{v}} \mu_{uv}(x_{uv}) = \mu_{u}(x_u),\    
uv\in\FE,\ u \in uv,\  x_u\in \SX_u\,.  \nonumber
\end{array} 
\end{align}
Constraints of~\eqref{eq:LP:energy-min} define the {\em local polytope} $\FL_{\FG}$.
Note that adding integrality constraints $\mu_w\in\{0,1\}^{|X_w|}$ makes the problem~\eqref{eq:LP:energy-min} equivalent to its combinatorial formulation~\eqref{eq:energy-min}.

\myparagraph{Integer Relaxed Pairwise Separable Linear
Programs (\IPSLP)} 
Below we describe a general problem format studied in~\cite{ConvergentMessagePassingNIPS}, which generalizes the local polytope relaxation~\eqref{eq:LP:energy-min}. Importantly, the same format fits also the Lagrangean decompositions of the graph matching problem, which we consider below. This makes it possible to consider all these relaxations at once from a general viewpoint.

Let a \emph{factor graph} $\SG = (\BF,\BE)$ consist of nodes $\BF = \{1,\ldots,k\}$, called \emph{factors} and edges $\BE$, called \emph{factor-edges}.
Let $\SX_i \subset \{0,1\}^{\dim(\SX_i)}$, $i \in \BF$, be sets of binary vectors and (ii) $A_{(i,j)} \subset \{0,1\}^{\dim(\SX_i) \times K_{ij}}$, $ij \in \BE$, $K_{ij} \in \N$ be matrices with binary entries, which map binary vectors from $\SX_i$ into binary vectors from $\{0,1\}^{K_{ij}}$, i.e., $A_{(i,j)} \colon \SX_i \rightarrow \{0,1\}^{K_{ij}}$.
The \IPSLP\ is a class of problems, which factorize according to $\BG$.
\begin{align}
\label{eq:FactorGraphPrimal}
        \min_{\mu \in \Lambda_{\BG}} & \sum_{i \in \BF} \la \theta_i,\mu_i \ra \\
        \Lambda_{\BG} := &
\left\{
\begin{pmatrix} \mu_1 & \ldots & \mu_k \end{pmatrix}  : 
\begin{array}{ll}
\mu_i \in \conv(\SX_i) & i \in \BF \\
A_{(i,j)}\mu_i = A_{(j,i)} \mu_j & \forall ij \in \BE 
\end{array}
\vspace{3pt}\right\}. \nonumber
\end{align}
Constraints $A_{(i,j)}\mu_i = A_{(j,i)} \mu_j$ are associated with each factor-edge and are called {\em coupling constraints}.
When representing the local polytope relaxation~\eqref{eq:LP:energy-min} as~\eqref{eq:FactorGraphPrimal} we assume $\BF=\FV\cup\FE$ and $\BE=\{\{u, uv\}, \{v, uv\} : uv \in \FE\}$. The convex hull of $X_w$ is fully defined by the first line of constraints in~\eqref{eq:LP:energy-min}, since $X_w$ constitutes a set of unit binary vectors. The second line of constraints in~\eqref{eq:LP:energy-min} defines the coupling constraints.

We use variable names $\mu$ for (in general) non-binary vectors $\mu_i \in \conv(\SX_i)$ and $x$ for binary ones $x_i \in \SX_i$, $i\in\BF$.

%

\subsection{Graph Matching Problem Relaxations.}\label{sec:graph-matching-relaxations}
Below, we describe three relaxations of the graph matching~\eqref{eq:GraphMatching} problem, which fit the \IPSLP~\eqref{eq:FactorGraphPrimal} format. The first one results in a standard local polytope relaxation~\eqref{eq:LP:energy-min} of a specially constructed CRF, the second one utilizes additional coupling constraints on top of~\eqref{eq:LP:energy-min}, while the third approach uses a network flow subproblem. Additionally, we use the inverse formulation~\eqref{eq:InverseGraphMatching} and build two additional \IPSLP s.


\myparagraph{(R1) Graph Matching as CRF.}
To build a CRF equivalent to the graph matching we start with the underlying CRF as in~\eqref{eq:GraphMatching}  and express the uniqueness constraints in the edge factors. To this end we
(i)~extend the edge set $\FE$ with new edges connecting any two nodes having at least one common label, i.e.\ $\hat\FE: = \FE\cup\{uv\in{{\FV}\choose{2}} \colon \SX_u \cap \SX_v \neq \emptyset\}$; (ii)~assign edge potentials $\theta_{uv} \equiv 0$ to all new edges $\hat\FE\backslash\FE$; (iii)~for all $uv\in \hat\FE$ we assign $\theta_{uv}(x,x): = \infty$ $\forall x \in \SX_u \cap \SX_v$.
Any solution of the resulting CRF~\eqref{eq:energy-min} with cost $< \infty$ is an assignment.
The relaxation in terms of an \IPSLP\ is the local polytope~\eqref{eq:LP:energy-min}.

This approach results in general in a quadratic number of additional edge potentials, which may become intractable as the size of the graph matching problem grows.

\myparagraph{(R2) Relaxation with Label Factors.}
For each label ${s \in \SL}$ we introduce an additional \emph{label factor}, which keeps track of nodes which assign label $s$.
The label set of this factor $X_s := \{u \in \FV : s \in \SX_u\} \cup \{\#\}$ consists of those nodes $u \in \FV$ which can be assigned label $s$ and an additional dummy node $\#$ representing non-assignment of label $s$. Label \# is necessary, as not every label needs to be taken.
The set of factors becomes $\BF = \FV \cup \FE \cup \SL$,
with the coupling constraint set $\BE = \{\{u,uv\}  ,  \{v, uv\} : uv \in \FE\} \cup \{\{u,l\} : u \in \FV, l \in \SX_u \}$.
The resulting \IPSLP\ formulation reads
\begin{align} \label{eq:LabelFactorsIPSLP}
\tag{R2}
& \min\sum_{w\in\FV\cup\FE}\la\theta_w,\mu_w\ra + \sum_{s\in\SL}\la\tilde\theta_s,\tilde\mu_s\ra\\
& \begin{array}{ll}
   \mu \in \FL_{\FG} \\
   \tilde\mu_{s} \in \conv(X_s), & s \in \SL\\
   \mu_{u}(s) = \tilde\mu_{s}(u), & s \in X_u\,.
  \end{array} \nonumber
\end{align}
Here we introduced additional potentials $\tilde\theta_s$ for the label factor. 
Initially, we set $\tilde\theta_s\equiv 0$.

\myparagraph{(R3) Relaxation with a Network Flow Factor.}
If one ignores the edge potentials $\theta_{uv}$ in~\eqref{eq:GraphMatching}, the problem can be equivalently reformulated as bipartite matching~\cite{AhujaMagnantiOrlinNetworkFlows}:
\begin{align}
        \label{eq:matching-constraints}
        &\min_{\mu\in\SM}\sum_{u \in \FV}\la\theta_u,\mu_u\ra,\quad \text{where}\\
        &\SM = \left\{ (\mu_u)_{u \in \FV} \geq 0 \colon 
        \begin{array}{ll}
                \sum_{s \in X_u} \mu_u(s) = 1, u \in \FV\\
                \sum_{u \in \FV, s \in X_u} \mu_u(s) \leq 1, s \in \SL\\
        \end{array}
        \right\}  \nonumber
\end{align}
Here we substituted the uniqueness constraints with the linear inequalities $\sum_{u \in \FV, s \in X_u} \mu_u(s) \leq 1$, which is equivalent for $\mu_u\in\{0,1\}^{|X_u|}$. 
It is known that $\SM$ is the convex hull of all binary vectors satisfying the conditions of $\SM$~\cite{AhujaMagnantiOrlinNetworkFlows}, i.e. $\conv(\SM \cap \{0,1\}^{\dim(\SM)}) = \SM$.
Therefore $\SM$ fits into the \IPSLP\ framework.
Crucially for an efficient implementation,~\eqref{eq:matching-constraints} can be efficiently solved by minimum cost flow solvers~\cite{AhujaMagnantiOrlinNetworkFlows}.

Below we treat~\eqref{eq:matching-constraints} as a separate factor $\SM$ and link it with~\eqref{eq:LP:energy-min} to obtain an \IPSLP.
Its factor graph is defined by $\BF = \FV \cup \FE \cup \{ \SM \}$ and $\BE = \{\{u,uv\}  ,  \{v, uv\} : uv \in \FE\} \cup \{\{u,\SM\} : u \in \FV \}$.
The resulting \IPSLP\ formulation is
\begin{align} \label{eq:local-polytope-matching-factor}
\tag{R3}
&        \min\sum_{w\in\FV\cup\FE}\la\theta_w,\mu_w\ra + \sum_{u\in\FV}\la\tilde\theta_u,\tilde\mu_u\ra\\
&        \mu \in \FL_{\FG},\ \ \tilde\mu \in \SM, \nonumber \\
&        \tilde\mu_u(s) = \mu_u(s),\ u\in\FV,\ s\in X_u \nonumber\,.
\end{align}
Initially, we set $\tilde\theta \equiv 0$.

Representation~\eqref{eq:matching-constraints} for the uniqueness constraints has been already used e.g., in~\cite{CombinatorialOptimizationMaxProductDuchi}.
However their optimization technique lacks both convergence guarantees and monotonicity of a lower bound, which our methods possess. 
The work~\cite{HungarianBP} considered the Lagrange dual of~\eqref{eq:local-polytope-matching-factor} as a relaxation the graph matching problem. Their relaxation is equivalent to~\eqref{eq:local-polytope-matching-factor}, but their algorithm differs from ours. We refer to Section~\ref{sec:graph-matching-algs} for a discussion of the differences and to Section~\ref{sec:Experiments} for an experimental comparison.


\myparagraph{(R4-R5) Coupling Original Graph Matching~\eqref{eq:GraphMatching} and its Inverse~\eqref{eq:InverseGraphMatching}.}
In the special case when $|\SL| = |\FV|$ we may solve the inverse graph matching problem~\eqref{eq:InverseGraphMatching} instead of the original one~\eqref{eq:GraphMatching}.
Another alternative is to solve both problems simultaneously and couple them together by requiring that the labeling of~\eqref{eq:GraphMatching} is the inverse permutation for the labeling from~\eqref{eq:InverseGraphMatching}.
Such an approach doubles the problem size, yet it may result in a smaller number of iterations required to obtain convergence. 
This approach works both for relaxations~\eqref{eq:LabelFactorsIPSLP} and~\eqref{eq:local-polytope-matching-factor}.

The resulting coupled \IPSLP\ for~\eqref{eq:LabelFactorsIPSLP} reads
\begin{align}
\tag{R4}
 & \min_{\mu,\mu'}\sum_{w\in\FV\cup\FE}\la\theta_w,\mu_w\ra + \hspace{-5pt}\sum_{w\in\FV'\cup\FE'}\la\theta'_w,\mu'_w\ra\\
 & \mu\in\FL_{\FG},\ \mu'\in\FL_{\FG'}\nonumber \\
 & \forall u \in \FV, u' \in \SX_u\colon \mu_u(u') = \mu'_{u'}(u)\,.\nonumber
\end{align}

Here the role of label factors in~\eqref{eq:LabelFactorsIPSLP} has been taken over by the node factors of the inverse graph matching~\eqref{eq:InverseGraphMatching}.  
We distribute the costs equally among $\theta$ and $\theta'$ initially.

Another coupled \IPSLP, corresponding to~\eqref{eq:local-polytope-matching-factor} reads
\begin{align}
 & \min_{\mu,\mu',\tilde \mu}\sum_{w\in\FV\cup\FE}\la\theta_w,\mu_w\ra +\hspace{-5pt} \sum_{w\in\FV'\cup\FE'}\la\theta'_w,\mu'_w\ra + \sum_{u\in\FV}\la\tilde\theta_u,\tilde\mu_u\ra \nonumber \\
 & \mu\in\FL_{\FG},\ \mu'\in\FL_{\FG'}, \tilde\mu\in\SM \tag{R5} \\
 & \forall u \in \FV, u' \in \SX_u\colon \mu_u(u') = \tilde\mu_{u}(u'),\ \mu'_{u'}(u) = \tilde\mu_{u}(u') \nonumber 
\end{align}
Here the network flow factor $\SM$ controls consistency of the original $\mu$ and inverse labelings $\mu'$. Initially, we set $\tilde\theta\equiv0$ and distribute costs in $\theta$ and $\theta'$ equally.

The optimal values obtained by relaxations (R1) -- (R5) may deliver differing bounds to~\eqref{eq:GraphMatching}, as characterized below.
\begin{proposition}\label{prop:RelaxationRelation}
        (R2) = (R3) and (R4) = (R5).
        Relaxation (R1) is weaker than (R2) and (R3).
\end{proposition}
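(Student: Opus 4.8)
The plan is to prove the three assertions separately, in each case by exhibiting an explicit projection/lifting between the feasible sets of the two relaxations that preserves the objective. For (R2) $=$ (R3) I would argue that the label factors of~\eqref{eq:LabelFactorsIPSLP} and the single network-flow factor $\SM$ of~\eqref{eq:local-polytope-matching-factor} impose exactly the same constraints on the shared variables $\mu\in\FL_{\FG}$, once one marginalizes out the auxiliary variables. Concretely, starting from a feasible point of (R2), for each $s\in\SL$ the label factor $\tilde\mu_s\in\conv(X_s)$ together with the coupling $\mu_u(s)=\tilde\mu_s(u)$ forces $\sum_{u\in\FV:\,s\in X_u}\mu_u(s)=1-\tilde\mu_s(\#)\le 1$; reading this across all $s$ and combining with the simplex constraints $\sum_{s\in X_u}\mu_u(s)=1$ from $\mu\in\FL_{\FG}$ shows that $(\mu_u)_{u\in\FV}\in\SM$ (using $\conv(\SM\cap\{0,1\}^{\dim})=\SM$ from~\cite{AhujaMagnantiOrlinNetworkFlows}), so setting $\tilde\mu:=\mu|_{\FV}$ gives a feasible point of (R3) of equal cost, because in both formulations the auxiliary potentials are initialized to $0$. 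Conversely, from a feasible point of (R3) one recovers label-factor variables by $\tilde\mu_s(u):=\mu_u(s)$ and $\tilde\mu_s(\#):=1-\sum_{u}\mu_u(s)\ge 0$, which lies in $\conv(X_s)$ since $X_s$ is a set of unit vectors. Hence the two feasible sets project onto the same set of $\mu$'s with the same objective values, so the optima coincide.

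For (R4) $=$ (R5) I would run the identical argument one level up: in both coupled programs the variables $\mu\in\FL_{\FG}$ and $\mu'\in\FL_{\FG'}$ are present, and the question is only whether coupling them through the inverse-graph node factors (as in R4) or through the flow factor $\SM$ (as in R5) yields the same constraint on the pair $(\mu,\mu')$. In (R4) the constraint is $\mu_u(u')=\mu'_{u'}(u)$ for all $u\in\FV$, $u'\in\SX_u$; in (R5) it is $\mu_u(u')=\tilde\mu_u(u')$ and $\mu'_{u'}(u)=\tilde\mu_u(u')$ with $\tilde\mu\in\SM$. Eliminating $\tilde\mu$ from (R5) gives $\mu_u(u')=\mu'_{u'}(u)$ plus the requirement that this common matrix lie in $\SM$; but $\mu\in\FL_{\FG}$ already supplies the row-sum-$=1$ constraints and $\mu'\in\FL_{\FG'}$ supplies the column constraints $\sum_{u'\in X'_{u'}}\mu'_{u'}(u)\le$ \dots, so membership in $\SM$ is automatic and the two constraint sets coincide. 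Thus (R4) and (R5) have identical feasible regions after eliminating auxiliaries, hence equal optima.

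For the last claim, that (R1) is weaker than (R2) and (R3), I would show that every feasible point of (R2) (equivalently (R3)) restricts to a feasible point of (R1), so the (R1) minimum is $\le$ the (R2)/(R3) minimum. Recall (R1) is the local polytope $\FL_{\hat\FG}$ of the densified CRF with the edge potentials $\theta_{uv}(x,x)=\infty$ on shared labels. Given $\mu\in$ (R2), I would extend it to the edges in $\hat\FE\setminus\FE$ by a consistent choice of pairwise marginals — e.g.\ the independent coupling $\mu_{uv}(s,t):=\mu_u(s)\mu_v(t)$ won't generally satisfy the coupling constraints, so instead I would use that the uniqueness constraint $\sum_u\mu_u(s)\le 1$ implied by (R2) is exactly what is needed to fill in a valid $\mu_{uv}$ with $\mu_{uv}(s,s)=0$ for shared $s$; one can take any point of the (nonempty) local polytope fibre over the fixed node marginals, which is nonempty precisely because the marginals are "locally consistent" in the matching sense. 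This gives a finite-cost feasible point of (R1) with the same objective. Strictness of the inequality (i.e.\ that (R1) is genuinely weaker, not merely $\le$) I would justify by a small explicit example — a three-node instance where the local polytope of the densified CRF admits a fractional point violating a global matching inequality that the label/flow factor rules out. I expect this last step — producing the separating example and, more importantly, cleanly verifying that the extension of $\mu$ to the new edges is always possible — to be the main obstacle; the two equalities are essentially bookkeeping once the auxiliary variables are eliminated.
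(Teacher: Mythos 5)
The paper's appendix actually leaves the proof of Proposition~\ref{prop:RelaxationRelation} empty, so there is no official argument to measure yours against; judged on its own, your treatment of the two equalities is correct and is the natural one. Eliminating the label factors from~\eqref{eq:LabelFactorsIPSLP} leaves exactly $\{\mu\in\FL_{\FG}\ :\ \sum_{u\,:\,s\in X_u}\mu_u(s)\le 1\ \forall s\in\SL\}$, and eliminating $\tilde\mu$ from~\eqref{eq:local-polytope-matching-factor} leaves the same set, because the row constraints of $\SM$ are already contained in $\FL_{\FG}$; since $\tilde\theta\equiv 0$ the objectives agree. The same elimination gives (R4)$=$(R5): the column sums $\sum_{u}\mu_u(u')=\sum_{u}\mu'_{u'}(u)=1$ are forced by $\mu'\in\FL_{\FG'}$, so $\tilde\mu\in\SM$ is automatic. (One small remark: the integrality statement $\conv(\SM\cap\{0,1\}^{\dim(\SM)})=\SM$ is not needed anywhere here; membership in the polytope $\SM$ is verified directly from its defining inequalities.)

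The third claim is where the real work lies, and your argument has two gaps. First, the existence of a coupling $\mu_{uv}\ge 0$ with prescribed marginals $\mu_u,\mu_v$ and $\mu_{uv}(s,s)=0$ on shared labels is not a consequence of generic ``local consistency''; by a Hall/Gale--Hoffman argument for the transportation polytope with the diagonal cells forbidden, it exists if and only if $\mu_u(s)+\mu_v(s)\le 1$ for every shared $s$. This does follow from $\sum_{w}\mu_w(s)\le 1$, but it is exactly the point where the (R2) constraint is used, so it must be stated and proved rather than asserted. Second, and more seriously, you only extend $\mu$ to the new edges $\hat\FE\setminus\FE$, whereas (R1) also sets $\theta_{uv}(x,x)=\infty$ on the \emph{original} edges $uv\in\FE$; a feasible point of (R2) may carry positive diagonal mass $\mu_{uv}(s,s)>0$ there, so it is not finite-cost feasible for (R1) as is, and redistributing that mass changes $\la\theta_{uv},\mu_{uv}\ra$ in an uncontrolled direction. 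In fact, if $\theta_{uv}(s,s)$ is permitted to be very negative, (R2) can exploit the fractional diagonal and its optimum drops below that of (R1), so the claimed inequality fails; the statement tacitly relies on a convention for the diagonal entries (e.g.\ that they are $+\infty$, or that the diagonal is excluded, in all relaxations), which you should make explicit — under that convention the original edges are handled exactly like the new ones. Finally, if ``weaker'' is meant strictly, the separating instance still has to be exhibited; the configuration you would want is three nodes sharing a label $s$ with $\mu_u(s)=\nicefrac{1}{2}$ each, which satisfies every pairwise constraint enforceable by (R1) but violates $\sum_u\mu_u(s)\le 1$ — you correctly flag this step but do not carry it out.
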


\section{General Algorithm} 
\label{sec:Algorithms}

In this section we define a general algorithm for \IPSLP\ problems~\eqref{eq:FactorGraphPrimal}, which is applicable to the decompositions (R1)--(R5) of the graph matching problem considered in Section~\ref{sec:graph-matching-relaxations}.  Our algorithm is a simplified version of the algorithm~\cite{ConvergentMessagePassingNIPS}, where we fixed several parameters to the values common to the relaxations (R1)--(R5).

Instead of optimizing \IPSLP~\eqref{eq:FactorGraphPrimal} directly, we consider its Lagrangean dual w.r.t. the coupling constraints $A_{(i,j)}\mu_i = A_{(j,i)} \mu_j$.
The \IPSLP\ problem~\eqref{eq:FactorGraphPrimal} can be shortly written as $\min_{\mu}\{\la \theta,\mu\ra,\ \text{s.t.}\ A\mu=0, \mu\in P\}$, where $\mu$ stands for $(\mu_i)_{i=1}^k$,  $A\mu=0$ represents all coupling constraints $A_{(i,j)}\mu_i - A_{(j,i)} \mu_j =0$ and $P$ denotes a polytope encapsulating the rest of constraints. By dualizing $A\mu=0$ with a vector of Lagrange multipliers $\Delta$ one obtains the Lagrange function $\la \theta,\mu\ra-\la\Delta,A\mu\ra=\la \theta-A^{\top}\Delta,\mu\ra$. After introducing $\theta^{\Delta}:=\theta-A^{\top}\Delta$ the dual objective reads $D(\Delta)=\min_{\mu}\{\la\theta^{\Delta},\mu\ra,\ \text{s.t.}\ \mu\in P\}$. It is well-known~\cite{boyd2004convex} that $D(\Delta)\le \la\theta,\mu\ra$ for any feasible $\mu$ and the dual problem consists in maximizing $D(\Delta)$ over~$\Delta$.
Going from $\theta$ to $\theta^{\Delta}$ is called an {\em equivalent transformation} or {\em reparametrization} in the literature. In the CRF-literature it is also known as {\em message passing}.



Now we apply the above considerations to the general \IPSLP\ problem~\eqref{eq:FactorGraphPrimal}.
Specifically, let $i,j\in\BF$ be two neighboring factors in the factor-graph $\BG$. 
Then for any $\mu_i$ and $\mu_j$ satisfying the coupling constraint for edge $ij \in \BE$ 
\begin{equation*}
        \begin{array}{rl}
                & \la \theta_i, \mu_i \ra + \la \theta_j, \mu_j \ra \\
                = & \la \theta_i, \mu_i \ra + \la \theta_j, \mu_j \ra + \underbrace{   \la \Delta_{(i,j)}, A_{(i,j)} \mu_i - A_{(j,i)} \mu_j \ra   }_{=0} \\
                = & \la \theta_i + A_{(i,j)}^\T \Delta_{(i,j)}, \mu_i \ra + \la \theta_j - A_{(i,j)}^\T \Delta_{(i,j)}, \mu_j \ra \,.
        \end{array}
\end{equation*}
The values and sign of the Lagrange multiplies $\Delta_{(i,j)}$ define how much cost is "sent" from $j$ to $i$ or the other way around.
When we consider a subset $J\subseteq\SN_{\BG}(i)$ of the neighboring factors for $i$, the resulting equivalent transformation reads:
\begin{equation}\label{eq:equiv-transform-i-J}
 \hspace{-5pt}\theta_i\to \theta_i + \sum_{j\in J}A_{(i,j)}^\T \Delta_{(i,j)}\ \text{and}\ \theta_j\to \theta_j - A_{(j,i)}^\T \Delta_{(i,j)}.
\end{equation}

We are interested in $\Delta_{(i,j)}$ which improve the dual.
Below we define a subclass of such messages for the same setting as in~\eqref{eq:equiv-transform-i-J}:
\begin{definition}\label{def:admissible-message} Messages $\Delta(i,j)$, $j\in J$, are called {\em admissible}, if there exists $x_i^* \in \argmin\limits_{x_i\in\SX_i} \la \theta_i, \mu_i \ra \cap \argmin\limits_{x_i\in\SX_i} \la \theta^{\Delta}_i, \mu_i \ra$ and additionally
\begin{equation}\label{equ:allowed-delta}
 \Delta_{(i,j)}(s) 
\begin{cases}
\ge 0, & \nu(s) =1\\
\le 0, & \nu(s) =0
\end{cases},
\text{where}\ \nu:=A_{(i,j)} x^*_i\,.
\end{equation} 
\end{definition}
We denote the set of admissible vectors by $AD(\theta_i,x^*_i,J)$.
\begin{lemma}[\cite{ConvergentMessagePassingNIPS}]\label{lem:admissible-dual-monotone}
 Admissible messages do not decrease the dual value, i.e., $\Delta\in AD(\theta^{\phi}_i,x^*_i,J)$ implies $D(0)\le D(\Delta)$.
\end{lemma}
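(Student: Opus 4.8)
The plan is to compare $D(0)$ and $D(\Delta)$ directly by exhibiting, for the reparametrized costs $\theta^{\Delta}$, a feasible point of the inner minimization whose cost is no larger than $D(0)$. Recall that $D(\Delta) = \min_{\mu \in P} \la \theta^{\Delta}, \mu \ra$, where $P = \prod_{i} \conv(\SX_i)$ is the product of the individual factor polytopes (the coupling constraints have been dualized away), so the inner problem decouples over factors. Consequently $D(\Delta) = \sum_{i \in \BF} \min_{x_i \in \SX_i} \la \theta^{\Delta}_i, x_i \ra$, and likewise $D(0) = \sum_{i \in \BF} \min_{x_i \in \SX_i} \la \theta_i, x_i \ra$. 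Since the equivalent transformation~\eqref{eq:equiv-transform-i-J} only changes $\theta_i$ and the neighbors $\theta_j$ for $j \in J$, all other summands are untouched, so it suffices to show
\begin{equation*}
 \min_{x_i} \la \theta^{\Delta}_i, x_i \ra + \sum_{j \in J} \min_{x_j} \la \theta^{\Delta}_j, x_j \ra \;\ge\; \min_{x_i} \la \theta_i, x_i \ra + \sum_{j \in J} \min_{x_j} \la \theta_j, x_j \ra\,.
\end{equation*}

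First I would handle the factor $i$. By the definition of admissibility there is a point $x_i^\ast$ minimizing both $\la \theta_i, \cdot \ra$ and $\la \theta_i^{\Delta}, \cdot \ra$ over $\SX_i$, so on the $i$-side the two minima differ exactly by $\sum_{j\in J}\la \Delta_{(i,j)}, A_{(i,j)} x_i^\ast \ra = \sum_{j\in J}\la \Delta_{(i,j)}, \nu^{(j)}\ra$, where $\nu^{(j)} := A_{(i,j)} x_i^\ast$. Next, for each neighbor $j\in J$ I would bound the change in its minimum from below. We have $\theta_j^{\Delta} = \theta_j - A_{(j,i)}^\T \Delta_{(i,j)}$, hence for the minimizer $\hat x_j$ of $\la \theta_j, \cdot\ra$,
\begin{equation*}
 \min_{x_j} \la \theta_j^{\Delta}, x_j \ra \;\le\; \la \theta_j^{\Delta}, \hat x_j \ra \;=\; \min_{x_j}\la \theta_j, x_j\ra - \la \Delta_{(i,j)}, A_{(j,i)} \hat x_j \ra\,,
\end{equation*}
so the drop in the $j$-summand is at most $\la \Delta_{(i,j)}, A_{(j,i)} \hat x_j\ra$. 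Adding the $i$-gain and all $j$-drops, the total change is at least $\sum_{j\in J}\la \Delta_{(i,j)},\, \nu^{(j)} - A_{(j,i)}\hat x_j\ra$. Now the sign conditions~\eqref{equ:allowed-delta} enter: the coordinates of $A_{(j,i)}\hat x_j$ lie in $\{0,1\}$ (images of binary vectors under a $0/1$ matrix — this needs the structural assumption that $A_{(j,i)}\SX_j \subseteq \{0,1\}^{K_{ij}}$, which holds in~\eqref{eq:FactorGraphPrimal}), so on a coordinate $s$ with $\nu^{(j)}(s) = 1$ we have $\nu^{(j)}(s) - (A_{(j,i)}\hat x_j)(s) \ge 0$ and $\Delta_{(i,j)}(s)\ge 0$, while on a coordinate with $\nu^{(j)}(s) = 0$ we have $\nu^{(j)}(s) - (A_{(j,i)}\hat x_j)(s) \le 0$ and $\Delta_{(i,j)}(s)\le 0$; either way the product is $\ge 0$. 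Summing over coordinates and over $j$ gives that the total change is $\ge 0$, i.e.\ $D(\Delta) \ge D(0)$.

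The main obstacle, and the only place any care is needed, is the last step: matching up the sign of each coordinate of $\Delta_{(i,j)}$ with the sign of $\nu^{(j)}(s) - (A_{(j,i)}\hat x_j)(s)$ and making sure this coordinate-wise nonnegativity is exactly what the admissibility definition buys us. In particular one must use that $A_{(j,i)}$ maps $\SX_j$ into $\{0,1\}$-vectors of the matching dimension $K_{ij}$, so that the comparison of $A_{(i,j)}x_i^\ast$ and $A_{(j,i)}\hat x_j$ is coordinatewise between values in $\{0,1\}$; the inequality can fail if that binarity is dropped. Everything else is bookkeeping: the decoupling of $D(\cdot)$ over factors and the fact that reparametrization leaves the objective value unchanged on coupling-feasible points.
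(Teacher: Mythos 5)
The paper does not actually reproduce a proof of this lemma (it is imported verbatim from the cited reference), so I am judging your argument on its own merits. Your overall architecture is the right one: the dual decouples as $D(\Delta)=\sum_{i\in\BF}\min_{x_i\in\SX_i}\la\theta_i^{\Delta},x_i\ra$ because $P$ is a product of the polytopes $\conv(\SX_i)$; only the summands for $i$ and $j\in J$ are affected; the common minimizer $x_i^*$ guaranteed by admissibility gives the exact change $\sum_{j\in J}\la\Delta_{(i,j)},\nu^{(j)}\ra$ at factor $i$; and the final coordinatewise sign analysis, which needs $A_{(j,i)}x_j\in\{0,1\}^{K_{ij}}$, is precisely where the sign conditions~\eqref{equ:allowed-delta} enter.

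However, your treatment of the neighbor factors is run in the wrong direction, and as written the chain of inequalities does not close. You take $\hat x_j\in\argmin_{x_j}\la\theta_j,x_j\ra$ and derive $\min_{x_j}\la\theta_j^{\Delta},x_j\ra\le\min_{x_j}\la\theta_j,x_j\ra-\la\Delta_{(i,j)},A_{(j,i)}\hat x_j\ra$. This is an \emph{upper} bound on the new minimum, i.e.\ it shows the drop in the $j$-summand is \emph{at least} $\la\Delta_{(i,j)},A_{(j,i)}\hat x_j\ra$ --- the opposite of what you assert ("the drop is at most\ldots"), and an upper bound on $D(\Delta)-D(0)$ is useless for proving $D(\Delta)\ge D(0)$. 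The fix is local: choose $\hat x_j\in\argmin_{x_j}\la\theta_j^{\Delta},x_j\ra$ instead, so that $\min_{x_j}\la\theta_j^{\Delta},x_j\ra=\la\theta_j,\hat x_j\ra-\la\Delta_{(i,j)},A_{(j,i)}\hat x_j\ra\ge\min_{x_j}\la\theta_j,x_j\ra-\la\Delta_{(i,j)},A_{(j,i)}\hat x_j\ra$, which genuinely bounds the drop from above by $\la\Delta_{(i,j)},A_{(j,i)}\hat x_j\ra$. Your concluding sign argument is indifferent to which element of $\SX_j$ plays the role of $\hat x_j$, since $\la\Delta_{(i,j)},\nu^{(j)}-A_{(j,i)}x_j\ra\ge0$ holds for \emph{every} $x_j\in\SX_j$ by the binarity you correctly identified as essential; with the corrected choice of $\hat x_j$ the proof closes.
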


\begin{example}\label{example:addmissible-dual} Let us apply Definition~\ref{def:admissible-message} to the local polytope relaxation~\eqref{eq:LP:energy-min} of CRFs.
 Let $ij$ correspond to $\{u,uv\}$, where $u\in\FV$ is some node and $uv\in\FE$ is any of its incident edges and $J=\{j\}$.
 Then $x_i^*$ corresponds to a locally optimal label $x^*_u\in\arg\min_{s\in X_u}\theta_u(s)$ and $\nu(s)=\llbracket s= x^*_u\rrbracket$. Therefore we may assign $\Delta_{u,uv}(s)$ to any value from $[0,\theta_u(x^*_u) - \theta_u(s)]$. This assures that \eqref{equ:allowed-delta} is fulfilled and $x^*_u$ remains a locally optimal label after reparametrization even if there are multiple optima in $X_u$.
\end{example}
\myparagraph{Sending Messages.} Procedure~\ref{alg:factor-optimization} represents an elementary step of our optimization algorithm. It consists of sending messages from a node $i$ to a subset of its neighbors $J$.
\begin{algorithm}
\SetAlgorithmName{Procedure}{}\\
        \caption{Send messages from $i \in \BF$ to $J \subseteq \SN_{\BG}(i)$}
        \label{alg:factor-optimization}        
\textbf{Optimize factor:} \label{alg1:optimize-factor}
$
x_i^* \in \argmin\limits_{x_i\in\SX_i} \la \theta_i, \mu_i \ra 
$\\
        \textbf{Choose} $\delta \in \R^{d_i}\ \text{s.t.}\ \delta(s) \left\{ \begin{array}{ll} \geq 0, & x^*_i(s) = 1 \\ \leq 0,& x^*_i(s) = 0 \end{array} \right.$ \label{alg1:delta}\\
\textbf{Maximize admissible messages to $J$:}
        \label{alg1:optimize-messages}
\begin{equation}\label{eq:ReverseFactorOptimization}
\Delta_{(i,J)}:=(\Delta_{(i,j)})_{j\in J} \in 
\argmax\limits_{\hat\Delta \in D(\theta^{\phi}_i,x^*_i,J)} \la \delta,\theta_i^{\hat\Delta} \ra 
\end{equation}\\
\textbf{Update $\theta^i$ and $\theta_j$, $j\in J$, according to~\eqref{eq:equiv-transform-i-J}} \label{alg1:update-costs}
\end{algorithm}

Procedure~\ref{alg:factor-optimization} first computes an optimal labeling for the factor $i$ in line~\ref{alg1:optimize-factor}, then computes message updates in~\eqref{eq:ReverseFactorOptimization} and finally updates the costs $\theta$ in line~\ref{alg1:update-costs}.
The costs $\delta$ in line~\ref{alg1:delta} are chosen as $\pm 1$, except when $i = \SM$ is the network flow factor for (R3) and (R5). 
In this case, we choose $\delta(u,x_u) = \left\{ \begin{array}{ll} 0,& x_u = x_u^* \\ 1-\abs{X_u},& x_u \neq x_u^*\end{array} \right.$.

Computation~\eqref{eq:ReverseFactorOptimization} provides a maximally possible admissible message from $i$ to $\{J\}$. Essentially, it makes the cost vector of the factor~$i$ as uniform as possible. 
So, in the setting of Example~\ref{example:addmissible-dual} $\Delta_{u,uv}(s)$  becomes equal to $\theta_u(x^*_u) - \theta_u(s)$ and therefore $\theta_i^{\Delta}(s)=\theta_u(x^*_u)$ for all $s\in X_u$.
Since the result of~\eqref{eq:ReverseFactorOptimization} is an admissible message, Procedure~\ref{alg:factor-optimization} never decreases the dual objective, as follows from Lemma~\ref{lem:admissible-dual-monotone}.

\myparagraph{Dual Ascent Algorithm.} 
Let the notation $\{j_1,\dots,j_n\}_<$ stand for an ordered set such that $j_k < j_{k+1}$, $k=1,\dots,n$.
Algorithm~\ref{alg:message-passing} below goes over some of the factors $i \in \BF$ in a pre-specified order and calls Procedure~\ref{alg:factor-optimization} to send or receive messages to/from some of the neighbors. 
%
\begin{algorithm}
        \label{alg:message-passing}
        \caption{Dual Ascent for \IPSLP}
        {\bf{Input:}} ${I = \{i_1,\ldots,i_k\}_< \subseteq \BF}$\,, $(J_r(i) \subseteq \SN_{\BG}(i))_{i \in I}$\,, $(J_s(i) \subseteq \SN_{\BG}(i))_{i \in I}$\\
        \For{$iter=1,\ldots$}{
        \For{$i = i_1,\ldots,i_k$} {
                \textbf{Receive messages:} \\
        \For{$j \in J_r(i)$} {
                Call Algorithm~\ref{alg:factor-optimization} with input $(j,\{i\})$.
        }
                \textbf{Send messages:} \\
                Call Algorithm~\ref{alg:factor-optimization} with input $(i,J_s(i))$.
        }
        \text{Reverse~the~order~of}~$i_1,\ldots,i_k$~\text{and~exchange}~${J_r \leftrightarrow J_s}$
}
\end{algorithm}

Algorithm~\ref{alg:message-passing} works as follows: We choose an ordered subset of factors $\{i_1,\ldots,i_k\}_{<}$.
For each factor $i \in \BF$ we select a neighborhood $J_r(i) \subseteq \SN_{\BG}(i)$ of factors from which to receive messages and a neighborhood $J_s(i)$ to which messages are sent by Procedure~\ref{alg:factor-optimization}.
We run Algorithm~\ref{alg:message-passing} on $\{i_1,\ldots,i_k\}_{<}$ (forward direction) and $\{i_k,\ldots,i_1\}_{<}$ (backward direction) alternatingly until some stopping condition is met.
Since Algorithm~\ref{alg:message-passing} reparametrizes the problem by Procedure~\ref{alg:factor-optimization} only and the latter is guaranteed to not decrease the dual, so is Algorithm~\ref{alg:message-passing}. We refer to~\cite{ConvergentMessagePassingNIPS} for further theoretical properties of Algorithm~\ref{alg:message-passing}.

\section{Graph matching algorithms.}\label{sec:graph-matching-algs}

For each of the relaxations (R1)-(R5) of the graph matching problem we detail parameters of Algorithm~\ref{alg:message-passing} used in our experiments:
we define the sets $I$, $J_r(i)$, $J_s(i)$.

\myparagraph{Algorithm Names.} We use the following shortcuts for specializations of Algorithm~\ref{alg:message-passing} to the relaxations (R1)-(R5):
\textbf{GM} corresponds to (R1), \textbf{AMP} to (R2), \textbf{AMCF} to (R3). To obtain the relaxations (R1-R3) we use either the original graph, as in~\eqref{eq:GraphMatching}, or an inverse one, as in~\eqref{eq:InverseGraphMatching}. These options are denoted by suffixes \textbf{-O} and \textbf{-I} respectively. Additionally, the two coupled relaxations (R4) and (R5), are addressed by algorithms \textbf{AMP-C} and \textbf{AMCF-C} respectively. All in all, we have eight algorithms \textbf{GM-O}, \textbf{GM-I}, \textbf{AMP-O}, \textbf{AMP-I}, \textbf{AMP-C}, \textbf{AMCF-O}, \textbf{AMCF-I} and \textbf{AMCF-C}.

\myparagraph{The sets $I$, $J_r(i)$ and $J_s(i)$}are defined in Table~\ref{table:factor-order}. For algorithms with the suffix {\bf -I} the values are the same as for those with {\bf -O}, but corresponding to the inverse graph.

We assume the order of graph nodes $\FV := \{u_1,\ldots,u_n\}_<$ and labels $\SL := \{s_1,\ldots,s_{\abs{\SL}}\}_{<}$ to be given a priori. We define $u_n < \SM < s_1$ for the matching factor $\SM$ and  $u < uv < v$ for the edge factors $uv\in\FE$. Similarly, we define $s < ss' < s'$ for all edge factors $ss' \in \FE'$ in the inverse graph. 
We extend the resulting partial order to a total one, e.g., by topological sort. For $i\in\BF$ we define $\SN_{\BG}(i)_< := \{j \in \SN_{\BG}(i) : j < i\}$ and $\SN_{\BG}(i)_> := \SN_{\BG}(i) \backslash J_r(i)$ as the sets of preceding and subsequent factors.

Sending a message by some factor automatically implies receiving this message by another, coupled factor. Therefore, there is no need to go over all factors in Algorithm~\ref{alg:message-passing}. In particular, edge-factors are coupled to node-factors only, therefore processing all node factors in Algorithm~\ref{alg:message-passing} automatically means updating all edge-factors as well. In the processing order and selection of the sets $J_r(i)$ and $J_s(s)$ we follow the most efficient MAP-solvers TRWS~\cite{TRWSKolmogorov} and SRMP~\cite{SRMPKolmogorov} (the latter is a generalization of TRW-S to higher order models and has a slightly different implementation for pairwise CRFs~\eqref{eq:energy-min}). 
In the special case when all nodes contain disjoint subsets of labels the graph matching problem~\eqref{eq:GraphMatching} turns into MAP-inference in CRFs~\eqref{eq:energy-min}.
Then all our algorithms {\bf GM}, {\bf AMP} and {\bf AMCF} reduce to SRMP~\cite{SRMPKolmogorov}.

It is worth mentioning that for CRFs there exist algorithms, such as MPLP~\cite{MPLP}, which go over edge-factors only and in this way implicitly process also node-factors. As empirically shown in SRMP~\cite{SRMPKolmogorov}, MPLP is usually slower than SRMP. In Section~\ref{sec:Experiments} we show that our methods also favorably compare to the recently proposed HBP~\cite{HungarianBP}, which is similar to {\bf AMCF-O}, but uses an MPLP-like processing schedule.  

%

\myparagraph{Optimization Subproblems of Procedure~\ref{alg:factor-optimization}.} For each call of Procedure~\ref{alg:factor-optimization} one must find the best factor element in line~\ref{alg1:optimize-factor} and compute the best messages by solving~\eqref{eq:ReverseFactorOptimization}. The first subproblem is solved by explicitly scanning all elements of the factor for node-, edge- and label-factors. For optimizing over $\SM$, we use a min-cost-flow solver.
Solving~\eqref{eq:ReverseFactorOptimization} for all choices of factors and neighborhoods is possible through closed-form solutions or calling a mainimum cost flow solver and is described in the appendix.


\begin{table}
        \scalebox{0.9}{
   \centering
\small
\begin{tabular}{|gccc|}
\hline
        Algorithm & Ordered set $I$ & $J_{r}(i)$ & $J_s(i)$ \\ \hline
        \textbf{GM-O} & $ \{u_1,\fdots,u_n\}_{<}$ & $\SN_{\BG}(i)_<$ & $\SN_{\BG}(i)_>$ \\
        \textbf{AMP-O} & $ \{u_1,\fdots,u_n,s_1,\fdots,s_{\abs{\SL}}\}_<$ & $\SN_{\BG}(i)_<$ & $\SN_{\BG}(i)_>$\\
        \textbf{AMCF-O} & $\{u_1,\fdots,u_n,\SM\}_{<}$ & $\SN_{\BG}(i)_< \cap \FE$ & $\SN_{\BG}(i)_>$ \\
        \textbf{AMP-C} & $ \{u_1,\fdots,u_n,s_1,\fdots,s_{\abs{\SL}}\}_<$ & $\SN_{\BG}(i)_<$ & $\SN_{\BG}(i)_>$\\
        \textbf{AMCF-C} & $\{u_1,\fdots,u_n,\SM,l_1,\fdots,l_{\abs{\SL}}\}_{<}$ & $\SN_{\BG}(i)_< \cap \FE$ & $\SN_{\BG}(i)_>$\\    
\hline        
\end{tabular}

\caption{
        Input sets for specializations of Algorithm~\ref{alg:message-passing}.
        For algorithms with the suffix {\bf -I} the sets are the same as for those with {\bf -O}, but correspond to the inverse graph.
}
        }
\label{table:factor-order}
\end{table}

\myparagraph{Primal Rounding.}
Algorithm~\ref{alg:message-passing} only provides lower bounds to the original problem~\eqref{eq:GraphMatching}.
To obtain a primal solution one may ignore the edge potentials $\theta_{uv}$ and solve the resulting reparametrized bipartite matching problem~\eqref{eq:matching-constraints} with a minimum cost flow solver, as done in~\cite{HungarianBP}.
Empirically we found that it is better to interleave rounding and message passing, similarly as in TRWS~\cite{TRWSKolmogorov} and SRMP~\cite{SRMPKolmogorov}.
Assume we have already computed a primal integer solution $x^*_v$ for all $v<u$ and we want to compute $x^*_u$. 
To this end, between lines 4 and 5 of Algorithm~\ref{alg:message-passing} for $i=u$ we assign
\begin{equation}
         x_u^* \in \argmin_{x_u : x_u\neq x_v^* \forall v < u} \theta_u(x_u) + \sum_{v < u: uv \in \FE} \theta_{uv}(x_u, x_v^*)\,.
\end{equation} 



\myparagraph{Time complexity}
        If $X_u = \SL$ $\forall u \in \FV$, time complexity per iteration is $O(\abs{\SL} \abs{\FV}+ \abs{\SL}^2 \abs{\FE})$ for \textbf{GM}. 
        For \textbf{AMP} we must add $ \abs{\SL}^3$ and for \textbf{AMCF} the time to solve~\eqref{eq:matching-constraints} (possible in $O(\SL^3)$).
Details and speedups are in the appendix.

\myparagraph{Higher Order Extensions.}
Our approach is straightforwardly extendable to the graph matching problem with higher order factors, a special case being third order:
Let $\FT \subseteq{{\FV}\choose{3}}$ be a subset ot triplets of nodes and $\theta_{uvw} : X_{uvw} \rightarrow \R$ be corresponding triplet potentials.
The corresponding third order graph matching problem reads
\begin{align}
        \min_{x \in \mathcal{X}_{\FV}} & \sum_{u \in \FV} \theta_u(x_u) + \sum_{uv \in \FE} \theta_{uv}(x_{uv}) + \sum_{uvw \in \FT} \theta_{uvw}(x_{uvw})\nonumber \\
        \text{s.t. } & x_u \neq x_v \ \forall u\neq v\,. \label{eq:HigherOrderGraphMatching}
\end{align}
The associated \IPSLP\ can be constructed by including additional factors for all triplets in an analoguous fashion as in~\eqref{eq:LP:energy-min}, see e.g.~\cite{Werner10} for the corresponding relaxation.

For relaxations (R1) -- (R5) we use third order factors to enforce cycle inequalities, which we add in a cutting plane approach as in~\cite{FrustratedCyclesSontagEtAl}.
For this we set $\theta_{uvw} \equiv 0$ at the beginning.
By this construction~\eqref{eq:HigherOrderGraphMatching} is equivalent to~\eqref{eq:GraphMatching}, however the corresponding \IPSLP\ are not: Triplet potentials make the relaxation tighter.


\section{Experiments}
\label{sec:Experiments}
\myparagraph{Algorithms.}
We compare against the two Lagrangean decomposition based solvers~\cite{GraphMatchingDDTorresaniEtAl,HungarianBP} described in Section~\ref{sec:intro}.

\begin{itemize}
\item The dual decomposition solver~\textbf{DD}~\cite{GraphMatchingDDTorresaniEtAl}.
        We use local subproblems containing $4$ nodes. Note that the comparison in~\cite{HungarianBP} was made with subproblems of size $3$, hence \textbf{DD}'s relaxation was weaker there.
\item ``Hungarian belief propagation''\textbf{HBP}~\cite{HungarianBP}.
        In~\cite{HungarianBP} a branch and bound solver is used on top of the dual ascent solver. For a fair comparison our reimplementation uses only the dual ascent component.
                As for \textbf{AMP} and \textbf{AMCF}, we append to \textbf{HBP} the suffixes \textbf{-O} and \textbf{-C} to denote the relaxations we let \textbf{HBP} run on.
\end{itemize}
According to~\cite{GraphMatchingDDTorresaniEtAl,HungarianBP}, these two algorithms outperformed competitors~\cite{GraduatedAssignmentGold,SpectralTechniqueAssignmentLeordeanu,BalancedGraphMatchingCour,CombinatorialOptimizationMaxProductDuchi,CoveringTreesLowerBoundQuadraticAssignmentJarkony,GlobalSolutionCorrespondenceJoao,MRFSemidefiniteTorr,ProbabilisticSubgraphMatchingSchellewald,IntegerFixedPointGraphMatching,RandomWalksForGraphMatching,FactorizedGraphMatching,LocalSparseMatching} at the time of their publication, hence we do not compare against the latter ones.

We set a hard threshold of $1000$ iterations for each algorithm, exiting earlier when the primal/dual gap vanishes or no dual progress was observed anymore.
We compute primal solutions every $5$-th iteration in our algorithms. For \textbf{GM}, \textbf{AMP}, \textbf{AMCF} and \textbf{HBP} we use the tightening extension discussed in Section~\ref{sec:graph-matching-algs} to improve the dual lower bound. We tighten our relaxation whenever no dual progress occurs.

\myparagraph{Datasets.}

\begin{table} 
        \small
\centering
\begin{tabular}{|gcccc|} %
\hline %
\texttt{dataset} & \#I & \#V & \#L & C \\ \hline %
\texttt{house} & 105 & 30 & 30 & dense \\ %
\texttt{hotel} & 105 & 30 & 30 & dense \\ %
\texttt{car} & 30 & 19-49 & 19-49 & dense \\ %
\texttt{motor} & 20 & 15-52 & 15-52 & dense \\ %
\texttt{graph flow} & 6 & 60-126 & 2-126 & sparse \\ %
\texttt{worms} & 30 & $\leq600$ & 20-60 & sparse \\ %
\hline %
\end{tabular} %
\caption{ %
        Dataset description. \#I denotes number of instances, \#V the number of nodes $\abs{\FV}$, \#L the number $\abs{X_u}$ of labels a node $u \in \FV$ can be matched to and C the connectivity of the graph. %
} %
\label{table:DatasetDescription} %
\end{table}

\begin{figure*}%
\centering%
\begin{minipage}{0.3\textwidth}%
\vspace{0pt}%
  \includegraphics[width=1.5\textwidth]{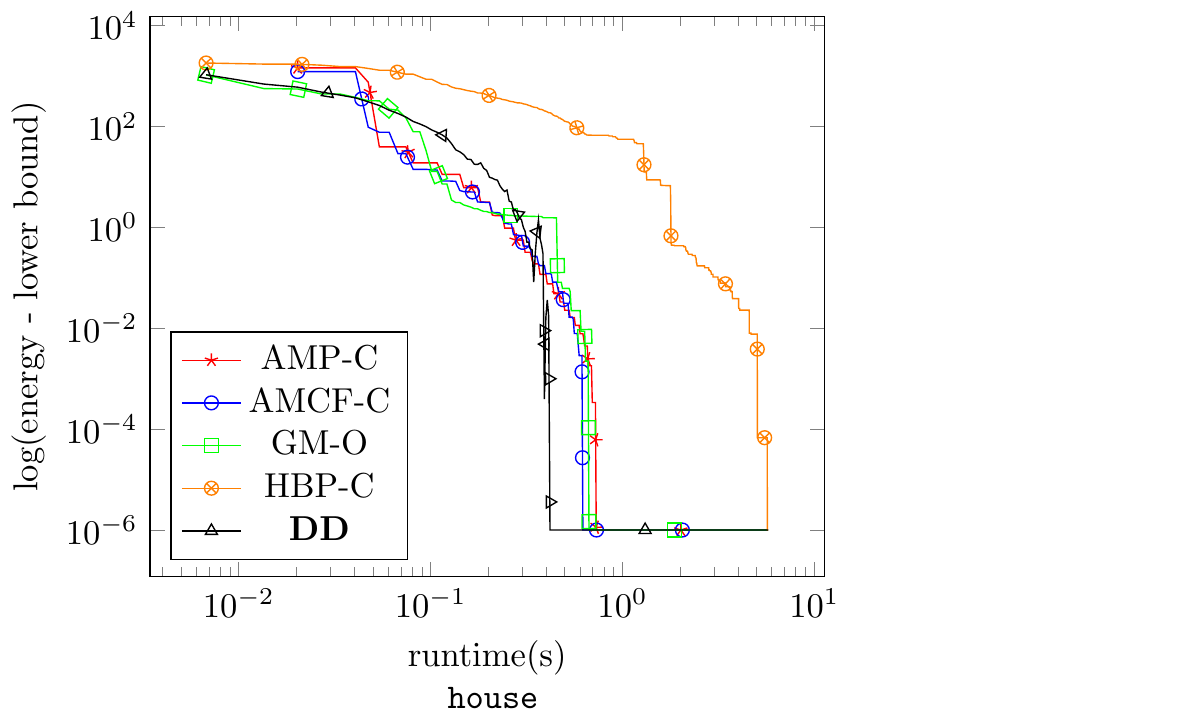}%
\end{minipage}%
\hspace*{0.8cm}%
\begin{minipage}{0.3\textwidth}%
\vspace{0pt}%
  \includegraphics[width=1.5\textwidth]{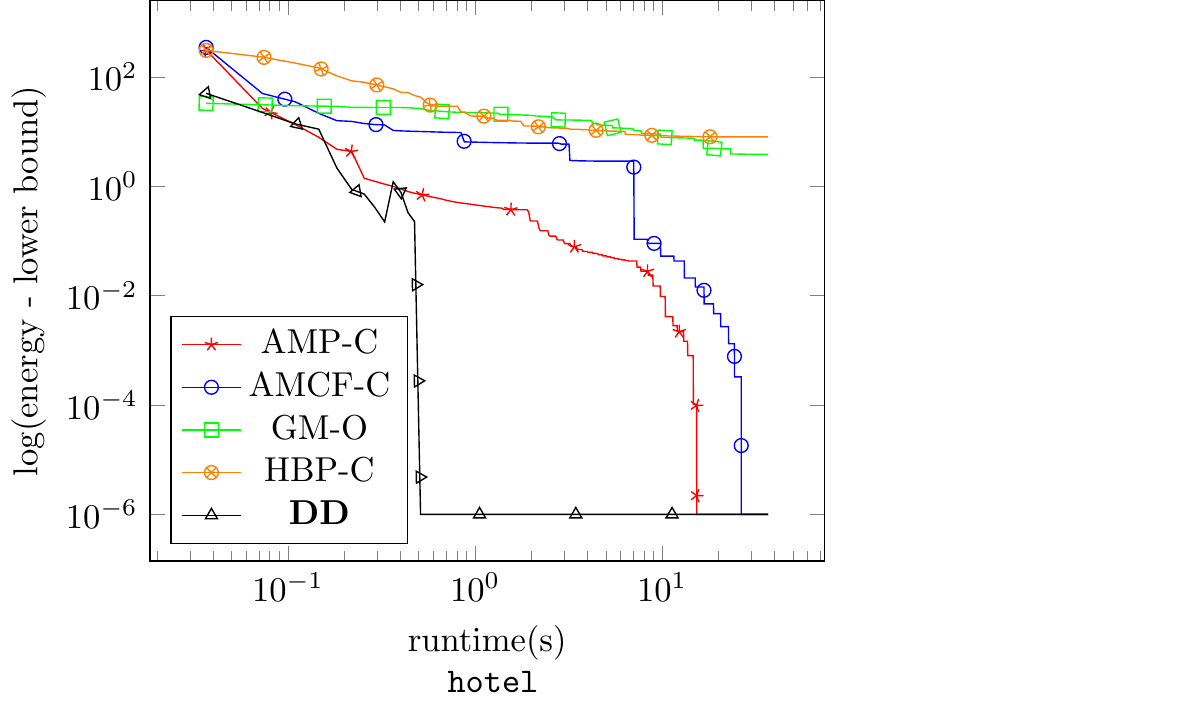}%
\end{minipage}%
\hspace*{0.4cm}%
\begin{minipage}{0.3\textwidth}%
\vspace{0pt}%
  \includegraphics[width=1.5\textwidth]{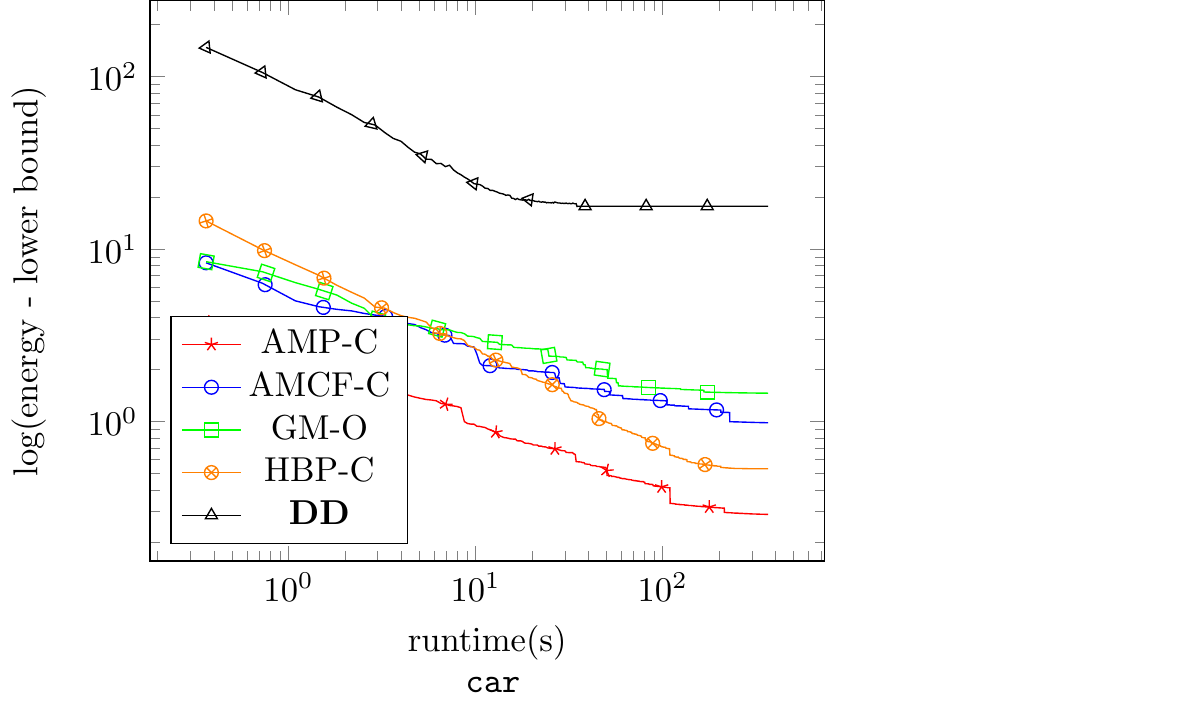}
\end{minipage}%
\\
\begin{minipage}{0.3\textwidth}%
\vspace{0pt}%
  \includegraphics[width=1.5\textwidth]{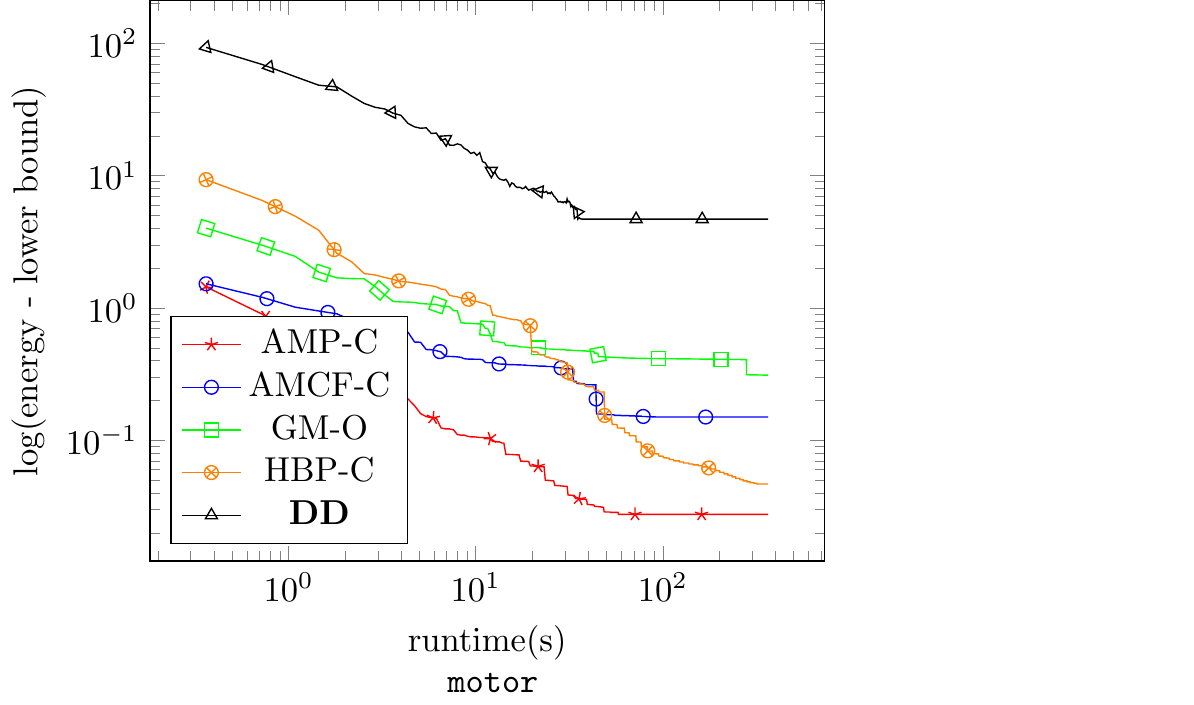}%
\end{minipage}%
\hspace*{0.8cm}%
\begin{minipage}{0.3\textwidth}%
\vspace{0pt}%
  \includegraphics[width=1.5\textwidth]{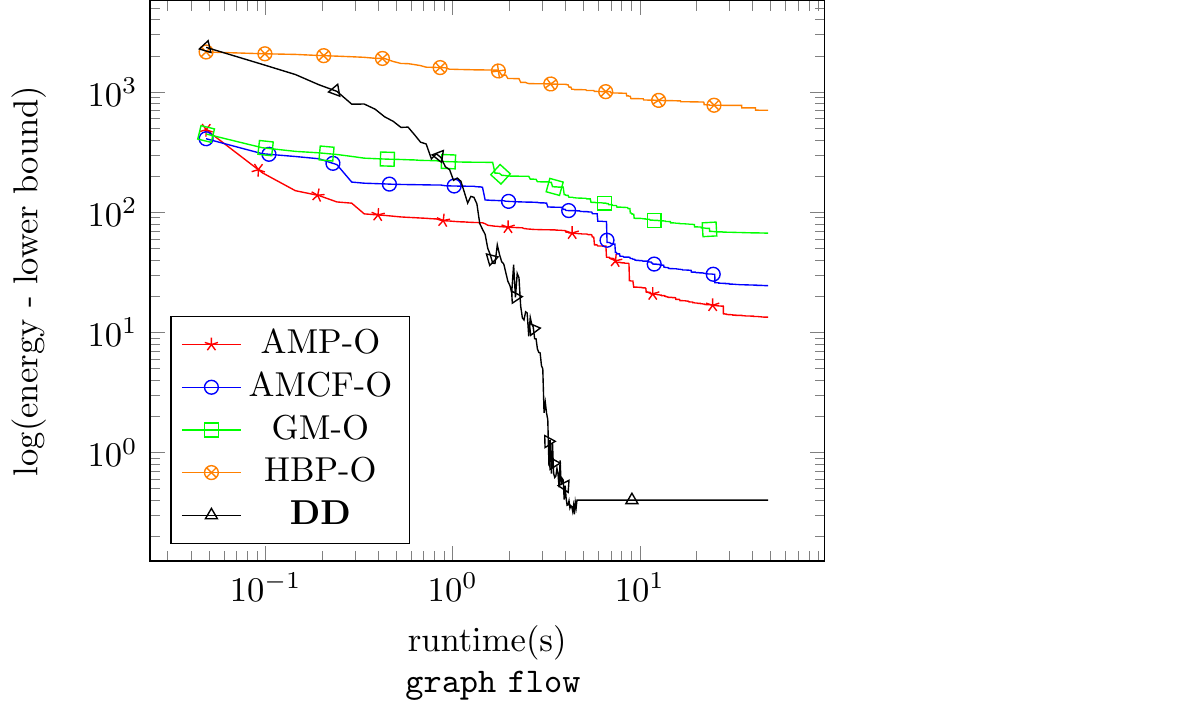}%
\end{minipage}%
\hspace*{0.4cm}%
\begin{minipage}{0.3\textwidth}%
\vspace{0pt}%
  \includegraphics[width=1.5\textwidth]{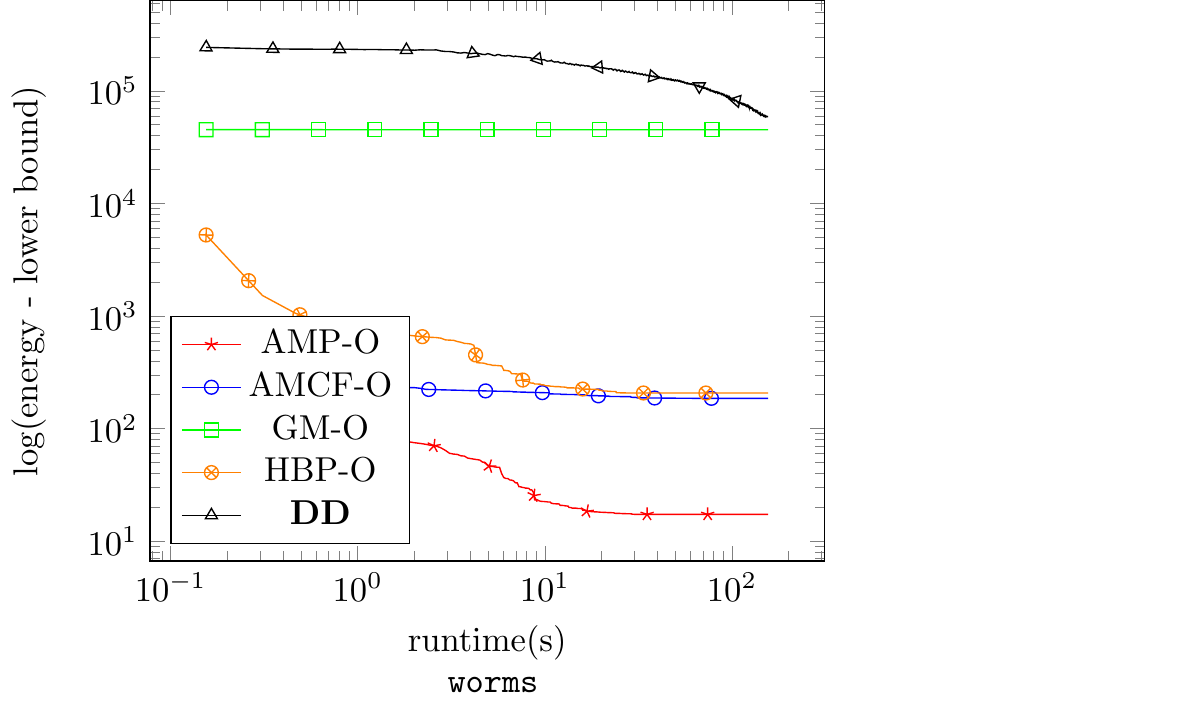}%
\end{minipage}%
  \caption{
  Plots showing convergence over time for \texttt{house}, \texttt{hotel}, \texttt{car}, \texttt{motor}, \texttt{graph flow} and \texttt{worms} datasets. 
        Values denote $\log(\text{upper bound} - \text{lower bound})$. 
  Values are averaged over all instances of the dataset.
  The x-axis and y-axis are logarithmic.
  }
\label{fig:DatasetRuntimePlot}
\end{figure*}

We have compared on six datasets:
\begin{itemize}
        \item \texttt{house}~\cite{HouseDataset} and \texttt{hotel}~\cite{HotelDataset} with costs as in~\cite{GraphMatchingDDTorresaniEtAl}. The task is to find matching feature points between images capturing an object from different viewpoints.
        \item \texttt{car} and \texttt{motor}, both used in~\cite{UnsupervisedLearningForGraphMatching}, containing pairs of cars and motorbikes with keypoints to be matched. The images are taken from the VOC PASCAL 2007 challenge~\cite{VocPascal}.
                Costs are computed from features as in~\cite{UnsupervisedLearningForGraphMatching}.
        \item The \texttt{graph flow} dataset~\cite{GraphFlowDataset} comes from a tracking problem with large displacements~\cite{GraphFlow}.
                Keypoints in frames of RGB-D images obtained by a Kinect camera~\cite{Kinect} are matched. The depth information provided by the Kinect camera is taken into account when computing the potentials $\theta$.
        \item The \texttt{worms} dataset~\cite{KainMueller2014active} from bioimaging.
                The goal is to annotate nuclei of C. elegans, a famous model organism used in developmental biology, by finding the corresponding nuclei in a precomputed model.
                The instances of {\tt worms} are, to our knowledge, the largest graph matching datasets ever investigated in the literature.
\end{itemize}
Whereas the first $5$ datasets are publicly available, the \texttt{worms} dataset was kindly given to us by the authors of~\cite{KainMueller2014active} and will be published.
A summary of dataset characteristics can be found in Table~\ref{table:DatasetDescription}.
Previous computational studies concentrated on small-scale problems having up to $60$ nodes and labels.
We have included the {\tt worms} dataset with up to $500$ nodes and $\abs{\SL} = 1500$ labels.  

\myparagraph{Results.}
Fig.~\ref{fig:DatasetRuntimePlot} shows performance of the algorithms on all $6$ considered datasets.
Among all variants {\bf -O}, {\bf-I}, {\bf -C} corresponding respectively to the original, inverse and coupled formulations we plotted only the best one. As expected, for dense graphs (datasets {\tt house}, {\tt hotel}, {\tt car}, {\tt motor}) the variant {\bf -C} with coupling provided most robust convergence, being similar to the best of {\bf -O} either {\bf -I} and therefore is presented on the plots.
For sparse graphs, the inverse representation becomes too expensive, as the inverse edge set $\FE'$ may be dense even though $\FE$ is sparse in~\eqref{eq:InverseGraphMatching}.
Therefore we stick to the original problem {\bf -O}.

\begin{itemize}
\item \texttt{hotel} and \texttt{house} are easy datasets, and many instances needed $< 5$ iterations for convergence. {\bf AMP}, {\bf AMCF} and {\bf DD} were able to solve all instances to optimality within few seconds or even faster. However, {\bf DD} is the fastest method for this data.
\item \texttt{car} and \texttt{motor} were already harder and the $1000$ iteration limit did not allow to ascertain optimality for all instances. {\bf AMP} significantly outperforms its competitors, {\bf DD} is significantly slower than the rest, whereas other algorithms show comparable results. 
\item on \texttt{worms} again {\bf AMP} significantly outperforms its competitors, {\bf AMCF} and {\bf HBP} converge to similar duality gap, although {\bf AMCF} does it one-two orders of magnitude faster, {\bf GM} and {\bf DD} return results which are hardly competitive.
\item \texttt{graph flow} is the only dataset, where {\bf DD} clearly overcomes all competitors, followed by {\bf AMP}. 
        We attribute it to \textbf{DD}'s tighter relaxation, (its "local" subproblems contain $4$ variables, whereas our subproblems have at most 3 variables after tightening.) 
\end{itemize}

\myparagraph{Insights and Conclusions}
\begin{itemize}
        \item {\bf AMP} shows overall best performance for both small dense and large sparse datasets. It is the best anytime solver: it has the best performance in the first iterations. This is beneficial (i) if the run-time is limited or (ii) in branch-and-bound procedures, where a good early progress helps to efficiently eliminate non-optimal branches fast.
 \item Although {\bf AMP}, {\bf AMCF} and {\bf HBP} address equivalent relaxations (having the same maximal dual value) their convergence speed is different. {\bf AMCF} and {\bf HBP} are generally slower than {\bf AMP}, which we attribute to the suboptimal redistribution of the costs by the min-cost-flow factors $\{\SM\}$ when maximizing messages in~\eqref{eq:ReverseFactorOptimization}.
 \item {\bf DD}'s relatively good performance is probably due to the large subproblems used by this method. First, this decreases the number of dual variables, which accelerates bound convergence; second, this makes the relaxation tighter, which decreases the duality gap. We attribute slow convergence of {\bf DD} to the subgradient method.
 \item Summarizing, larger subproblems are profitable for the sub-gradient method, but not for message passing. 
 \item {\bf AMCF} outperforms {\bf HBP} due to better message scheduling.
 \item We attribute the inferior performance of {\bf GM} mostly to the weakest relaxation it optimizes. Even under this condition, due to a good message scheduling and fast message passing it outperforms {\bf DD} and {\bf HBP} on several datasets. 
\end{itemize}

A detailed evaluation of all instances is in the appendix.


\section{Acknowledgments}
The authors would like to thank Vladimir Kolmogorov for helpful discussions.
This work is partially funded by the European Research Council under the European Unions Seventh Framework Programme (FP7/2007-2013)/ERC grant agreement no 616160.

{\small
\bibliographystyle{../style.files/ieee}
\bibliography{../literatur}
}
\clearpage
\onecolumn
\section{Supplementary Material}

\subsection{Proofs}
\myparagraph{Proof of Proposition~\ref{prop:RelaxationRelation}}

\subsection{Optimization Subproblems of Procedure~\ref{alg:factor-optimization}}
In Procedure~\ref{alg:factor-optimization} the two problem in lines~\ref{alg1:optimize-factor} and~\ref{alg1:optimize-messages} must be solved.
Solution of the optimization problem in line~\ref{alg1:optimize-factor} was discussed in the main part of the paper.
Therefore, it only remains to show how to carry optimization of the problem in line~\ref{alg1:optimize-messages} efficiently for all cases that can occur.
This is shown in Table~\ref{table:ReparametrizationAdjustment}.

Checking validity of the operations in Table~\ref{table:ReparametrizationAdjustment} for $i = u \in \FV$ and $i = uv \in \FE$ is straightforward.
For $i = \SM$ and $J = \FV$, we prove correctness below. Correctness for $i = \SM$ and $J = \SL$ is analoguous.
\begin{lemma}
The reparametrization adjustment problem~\eqref{eq:ReverseFactorOptimization} for $i = \SM$ and $J = \FV$ is given by~\eqref{eq:FlowToUnaryMessage-1}.
Moreover it is the dual of a minimum cost network flow problem.
\end{lemma}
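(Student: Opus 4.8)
The plan is to unfold \eqref{eq:ReverseFactorOptimization} for the concrete data of the network-flow factor and then recognize the linear program that comes out. In the \IPSLP~\eqref{eq:local-polytope-matching-factor} the coupling matrix $A_{(\SM,u)}$ only extracts the block $\tilde\mu_u$ of the flow variable, so a family of messages $\hat\Delta=(\hat\Delta_{(\SM,u)})_{u\in\FV}$ reparametrizes $\theta_{\SM}$ block-wise as in \eqref{eq:equiv-transform-i-J}, and $AD(\theta_{\SM},x_{\SM}^*,\FV)$ consists of those $\hat\Delta$ which (i) obey the sign pattern \eqref{equ:allowed-delta} with $\nu_u=A_{(\SM,u)}x_{\SM}^*=\llbracket\cdot=x_u^*\rrbracket$, where $x_u^*$ is the label that the optimal matching $x_{\SM}^*$ assigns to $u$, and (ii) keep $x_{\SM}^*$ a minimizer of $\la\theta_{\SM}^{\hat\Delta},\cdot\ra$ over $\SM$. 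Inserting the $\delta$ prescribed for $i=\SM$ (zero on the entries $(u,x_u^*)$ and equal to $1-\abs{X_u}$ elsewhere) turns \eqref{eq:ReverseFactorOptimization} into an optimization over $\hat\Delta$ alone.

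The key step is to replace condition (ii) --- a priori a system of inequalities indexed by all vertices of $\SM$ --- by a polynomial-size certificate. Since $\SM$ is the integral bipartite-matching polytope of \eqref{eq:matching-constraints}, LP duality and complementary slackness give that $x_{\SM}^*$ minimizes $\la\theta_{\SM}^{\hat\Delta},\cdot\ra$ over $\SM$ iff there exist prices $(p_u)_{u\in\FV}$ and $(q_s)_{s\in\SL}$ with $q_s\le 0$ and $q_s=0$ for every label left unmatched by $x_{\SM}^*$, such that the reduced costs $\theta_{\SM}^{\hat\Delta}(u,s)-p_u-q_s$ are $\ge 0$ for all $(u,s)$ and $=0$ on the support of $x_{\SM}^*$. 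Substituting this in turns \eqref{eq:ReverseFactorOptimization} into a single linear program in $(\hat\Delta,p,q)$ of size polynomial in $\abs{\FV}\cdot\abs{\SL}$.

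Next I would simplify this program using the special structure of $\delta$: since $\delta(u,\cdot)$ vanishes on $x_u^*$ and is the negative constant $1-\abs{X_u}$ elsewhere, at an optimum every off-support reduced cost is driven to zero (feasibility of the sign constraint already forces $p_u+q_s\le\theta_{\SM}(u,s)$ there), so $\theta_{\SM}^{\hat\Delta}(u,s)=p_u+q_s$ off the support and, by complementary slackness, on it. Substituting back expresses each $\hat\Delta_{(\SM,u)}$ through $p,q$ and leaves a linear program in the price variables alone, whose only substantive constraints are the per-arc inequalities $p_u+q_s\lessgtr\theta_{\SM}(u,s)$ together with $q_s\le 0$ and $q_s=0$ on unmatched labels. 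Its constraint matrix is, up to signs, the node-arc incidence matrix of the bipartite graph on $\FV\dcup\SL$ (each $p_u$, $q_s$ appears with coefficient $\pm1$, once per incident arc), so this is exactly the LP dual of a minimum-cost network-flow problem; reading the optimal prices back through the substitution yields the message formula \eqref{eq:FlowToUnaryMessage-1}, whose components one recognizes as differences of min-marginals $\min\{\la\theta_{\SM},\mu\ra:\mu\in\SM,\ \mu_u(s)=1\}$ of the matching problem. The case $i=\SM$, $J=\SL$ follows verbatim by exchanging the roles of $\FV$ and $\SL$.

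I expect the main obstacle to be the middle two steps: encoding ``$x_{\SM}^*$ stays optimal after reparametrization'' as a compact linear system without an exponential blow-up, and then checking that after substituting the specific $\delta$ the on-support cost variables genuinely decouple, so that the program collapses to the stated closed form \eqref{eq:FlowToUnaryMessage-1}. The sign bookkeeping for the unmatched-label prices ($q_s\le 0$, $q_s=0$ if $s$ is unmatched) and the compatibility of the admissibility constraint \eqref{equ:allowed-delta} with the optimality certificate are the places where sign errors are easiest to make.
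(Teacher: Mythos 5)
Your proposal follows essentially the same route as the paper: encode the requirement that the optimal assignment remain optimal after reparametrization via node potentials (the reduced-cost optimality conditions for bipartite matching), write the resulting linear program in the messages and potentials, recognize it as the dual of a minimum cost flow problem, and eliminate the messages to obtain \eqref{eq:FlowToUnaryMessage-1}. The only real difference is cosmetic: the paper writes down the primal/dual pair \eqref{eq:PrimalDualNetworkFlowReverseOptimization} directly, so the equality $\tilde\theta_u(x_u)+\Delta_u(x_u)+\pi(u)-\psi(x_u)=0$ falls out of LP duality (free primal variables yield equality dual constraints) rather than from your ``drive the off-support reduced costs to the bound'' argument, which is precisely the step you flagged as delicate.
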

\begin{proof}
  Recall from network flow theory~\cite{AhujaMagnantiOrlinNetworkFlows}, that $x_{u}^* \in X_u$ is optimal for cost $\tilde\theta$, iff
  $\exists \pi \in \R^{\abs{\FV}}, \psi \in \R^{\abs{\SL}}$ such that
  \begin{equation*}
  \tilde\theta_{u}(x_u) - \pi(u) + \psi(x_u) \begin{cases} \leq 0, & x_u^*(x_u) = 1\\ \geq 0, & x_{u}^*(x_u) = 0 \end{cases} \forall u \in \FV, x_u \in X_u\,.
  \end{equation*}
Consider the primal/dual pair
\begin{equation}
   \label{eq:PrimalDualNetworkFlowReverseOptimization}
   \begin{array}{lr}
     \min_{(\mu_u)_{u \in \FV}} \sum_{u \in \FV} \la \tilde\theta_u, \mu_u \ra 
     & \max_{\Delta,\pi,\psi} \sum_{u \in \FV} \la \Delta_u, \delta_u \ra \\
      \forall u \in \FV, \sum_{x_u \in X_u} \mu_u(x_u) = 0  
      & \pi(u) \in \R \\
      \forall l \in \SL, - \sum_{u : l \in X_u} \mu_u(l) = 0 
      & \psi(l) \in \R \\
      \mu_u(x_u) \begin{cases} \geq \delta_u(x_u),& x_u \neq x_u^* \\ \leq \delta_u(x_u),& x_u = x_u^* \end{cases} 
        & \Delta_u(x_u) \in \begin{cases} \geq 0, & x_u^*  = x_u \\ \leq 0, & x_u^* \neq x_u \end{cases} \\
         \mu_u(x_u) \in \R 
         & \tilde\theta_u(x_u) + \Delta_u(x_u) + \pi(u) - \psi(x_u) = 0 \\
   \end{array}
\end{equation}
  On the right side, the adjustment problem~\eqref{alg1:optimize-messages} is written down explicitly. 
  The left hand side is a minimum cost flow problem, hence the second part of the claim is proven.

  The last equality above on the right hand side ensures that $\Delta_{u}(x_u) = - \tilde\theta_u(x_u) - \pi(u) + \psi(x_u)$.
  Substituting this everywhere on the right hand side of~\eqref{eq:PrimalDualNetworkFlowReverseOptimization} gives
  \begin{equation}
    \begin{array}{rl}
      \max_{\pi,\psi} & \sum_{u \in \FV} \pi(u) \cdot \left(\sum_{x_u \in \SX_u} \delta_u(x_u) \right) + \sum_{l \in \SL} \psi(l) \cdot \left( \sum_{u \in \FV: l \in X_u} \delta_u(l) \right)\\
       \text{s.t.} &\pi(u) \in \R \\
       &\psi(l) \in \R \\
         &   \tilde\theta_u(x_u) + \pi(u) - \psi(x_u) \begin{cases} \leq 0, & x_u^* = x_u \\ \geq 0, & x_u^* \neq x_u \end{cases} \\
    \end{array}
  \end{equation}
  This form matches the format given in~\eqref{eq:FlowToUnaryMessage-1}.
\end{proof}

\subsection{Time complexity}
The time complexity of running one iteration of message passing is essentially the time to run all required invocations of Algorithm~\ref{alg:factor-optimization} via the routines described in Table~\ref{table:ReparametrizationAdjustment}.
Total runtime per iteration for the various algorithms we have proposed can be found in Table~\ref{table:time-complexity}.
We assume that $X_u = \SL$ $\forall u \in \FV$. 
In sparse assignment problems, where this is not the case, run-time decreases according to sparsity.

If we hold the unary potentials $\theta_u$, $u \in \FV$ in a heap, we can support operation $\min_{s' \in J \cap X_u} \theta_u(s')$ which is required in the third line in Table~\ref{table:ReparametrizationAdjustment} in time $\log(\abs{\SL})$, since either $J \cap X_u = X_u$ (sending) or $\abs{J \cap X_u} = 1$ (receiving).

Hence, all our algorithms scale to realistic problem sizes.

\subsection{Detailed Experimental Evaluation}
Plots showing lower bound and primal solution energy per over time can be seen in Figure~\ref{fig:DatasetRuntimePlot}.

In Table~\ref{table:DatasetResults} dataset statistics are given together with final upper and lower bound as well as runtime averaged over all instances in specific datasets are given.

A per-instance evaluation of all considered algorithms can be found in Table~\ref{Table:InstanceResults}.

\onecolumn
        \centering

\begin{table*}
\begin{tabular}{|ggc|}
\hline
        \multicolumn{2}{|g}{Algorithm~\ref{alg:factor-optimization} input} & \multirow{2}{*}{Solution  $\Delta^*_{(i,j)}$ $\forall j \in J$ of~\eqref{eq:ReverseFactorOptimization}} \\ 
$i \in \BV$ & $J \subseteq \SN_{\BG}(i)$ & \\ \hline
                && $\Delta^*_{(u,uv)} = \nicefrac{(\theta_u - \min_{x_u \in X_u} \theta_u(x_u))}{\abs{J}}$ $\forall uv \in \FE \cap J$ \\
                && $\Delta^*_{(u,\SM)} = \nicefrac{(\theta_u - \min_{x_u \in X_u} \theta_u(x_u))}{\abs{J}} $ \\ 
        \multirow{-3}{*}{$i = u \in \FV$}
        & \multirow{-3}{*}{$J \subseteq \FE \cup \{\SM\} \cup \SL $} 
                & $\Delta^*_{(u,s)} = \nicefrac{(\theta_u(s) - \min_{s' \notin J\cap X_u} \theta_u(s'))}{\abs{J}} $ $\forall s \in X_u \cap J$\\ \hline
        $i = uv \in \FE$ & $J = \{u\}, u \in \FV$ & $\Delta^*_{(uv,u)}(x_u) = \min\limits_{x_v \in \SX_v} \{\theta_{uv}(x_u,x_v)\} - \min_{x_{uv} \in X_{uv}}\{ \theta_{uv}(x_{uv}) \}$ \\ \hline
$i = \SM$ & $J = \FV$ & 
\parbox{0.6\textwidth}{
\begin{equation}
\label{eq:FlowToUnaryMessage-1}
\begin{array}{rl}
\multicolumn{2}{c}{ \Delta^*_u(x_u) = -\tilde\theta_u(x_u) - \pi^*(u) + \psi^*(x_u) } \\
  (\pi^*,\psi^*) \in \argmax_{\pi,\psi} & \begin{matrix}\sum_{u \in \FV} \pi(u) \cdot \left(\sum_{x_u \in \SX_u} \delta_u(x_u) \right) \\ + \sum_{l \in \SL} \psi(l) \cdot \left( \sum_{u \in \FV: l \in X_u} \delta_u(l) \right) \end{matrix}\\
       \text{s.t.}
         &   \tilde\theta_u(x_u) + \pi(u) - \psi(x_u) \begin{cases} \leq 0, & x_u^* = x_u \\ \geq 0, & x_u^* \neq x_u \end{cases} \\
\end{array}
\end{equation}
}
\\ \hline
$i = \SM$ & $J = \SL$ & 
\parbox{0.6\textwidth}{
\begin{equation}
\begin{array}{rl}
  \multicolumn{2}{c}{ \Delta^*_s(u) = -\tilde\theta_u(s) - \pi^*(u) + \psi^*(s) } \\
  (\pi^*,\psi^*) \in \argmax_{\pi,\psi} & \begin{matrix}\sum_{u \in \FV} \pi(u) \cdot \left(\sum_{x_u \in \SX_u} \delta_{x_u}(u) \right) \\ + \sum_{l \in \SL} \psi(l) \cdot \left( \sum_{u \in \FV: l \in X_u} \delta_l(u) \right) \end{matrix}\\
       \text{s.t.}
         &   \tilde\theta_u(x_u) + \pi(u) - \psi(x_u) \begin{cases} \leq 0, & x_u^* = x_u \\ \geq 0, & x_u^* \neq x_u \end{cases} \\
\end{array}
\label{eq:FlowToUnaryMessage-2}
\end{equation}
}
\\ \hline
\end{tabular}
\caption{Message computation problems~\eqref{eq:ReverseFactorOptimization}}
\label{table:ReparametrizationAdjustment}
\end{table*}

\begin{table*}
        \begin{tabular}{|gc|}
                \hline Algorithm & Time complexity \\ \hline
                \textbf{GM} & $O(\abs{\SL} \cdot \abs{\FV} + \abs{\SL}^2 \cdot \abs{\hat\FE})$\\
                \textbf{AMP}& $O(\abs{\SL} \cdot \abs{\FV} + \abs{\SL}^2 \cdot \abs{\FE} + \abs{\SL}^3)$ \\ 
                \textbf{AMP}${}^{\dagger}$& $O(\abs{\SL} \cdot \abs{\FV} + \abs{\SL}^2 \cdot \abs{\FE} + \abs{\SL}^2 \cdot \log \abs{\SL})$ \\ 
                \textbf{AMCF} & $O(\abs{\SL} \cdot \abs{\FV} + \abs{\SL}^2 \cdot \abs{\FE} + MCF(\abs{\FV}, \abs{\FV}^2)$ \\ \hline
        \end{tabular}
        \caption{
                Time complexity per iteration for the three proposed algorithms.
                $MCF(n,m)$ is the time to solve a min-cost-flow problem on a graph with $n$ nodes and $m$ edges: 
                Orlin's algorithm has time complexity $O(m^2 \log n + m  \log^2 n)$~\cite{AhujaMagnantiOrlinNetworkFlows}.
                \textbf{AMP}${}^{\dagger}$ stores reparametrized unary costs in a heap to accelerate computation of the messages between label factors and unaries.
        }
        \label{table:time-complexity}
\end{table*}

%

\begin{figure*}%
\centering%
\begin{minipage}{0.3\textwidth}%
\vspace{0pt}%
  \includegraphics[width=1.5\textwidth]{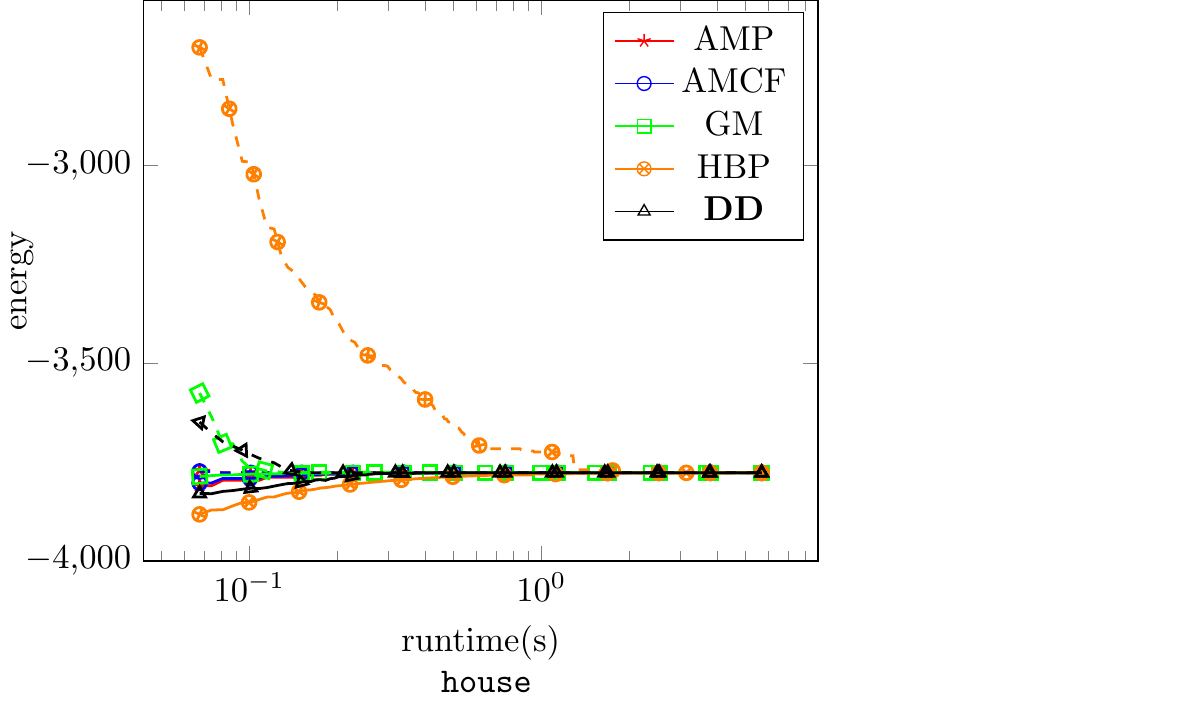}%
\end{minipage}%
\hspace*{0.8cm}%
\begin{minipage}{0.3\textwidth}%
\vspace{0pt}%
  \includegraphics[width=1.5\textwidth]{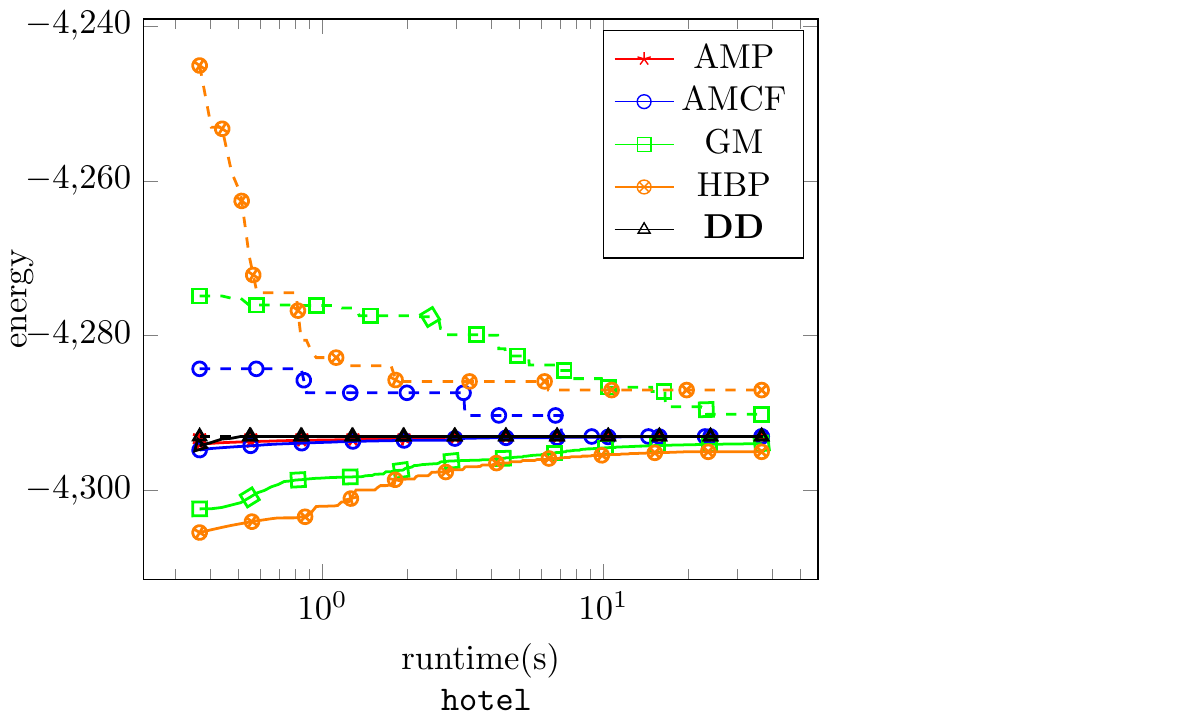}%
\end{minipage}%
\hspace*{0.4cm}%
\begin{minipage}{0.3\textwidth}%
\vspace{0pt}%
  \includegraphics[width=1.5\textwidth]{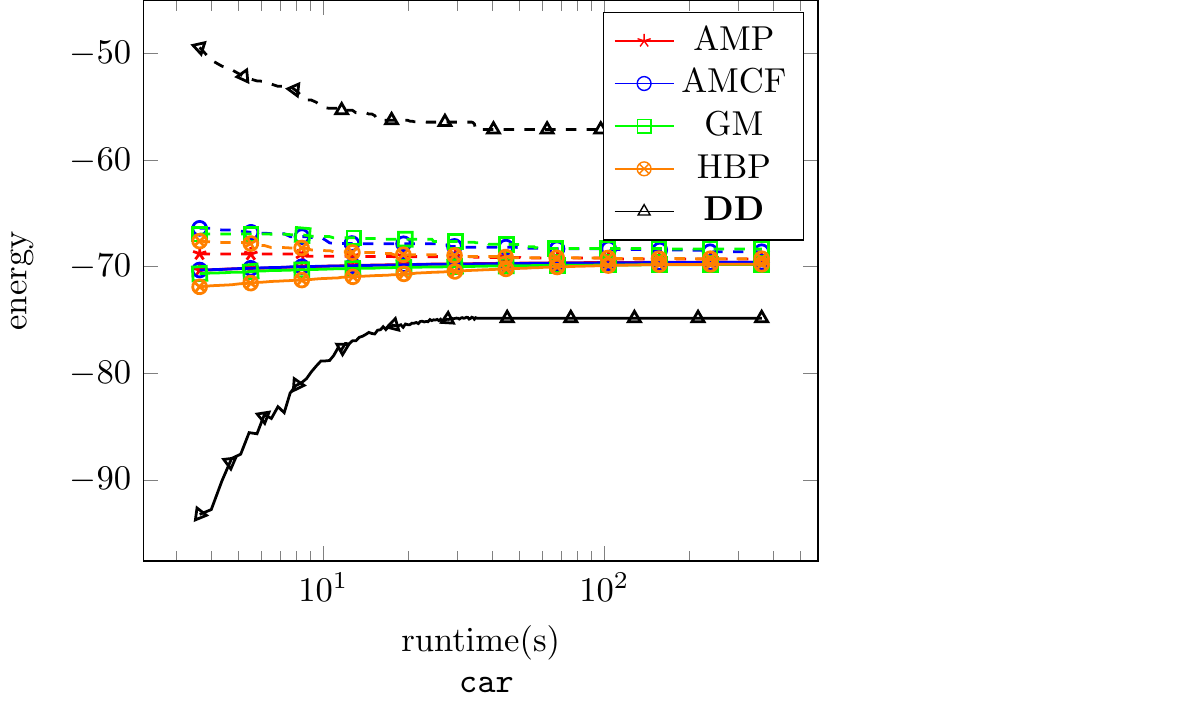}
\end{minipage}%
\\
\begin{minipage}{0.3\textwidth}%
\vspace{0pt}%
  \includegraphics[width=1.5\textwidth]{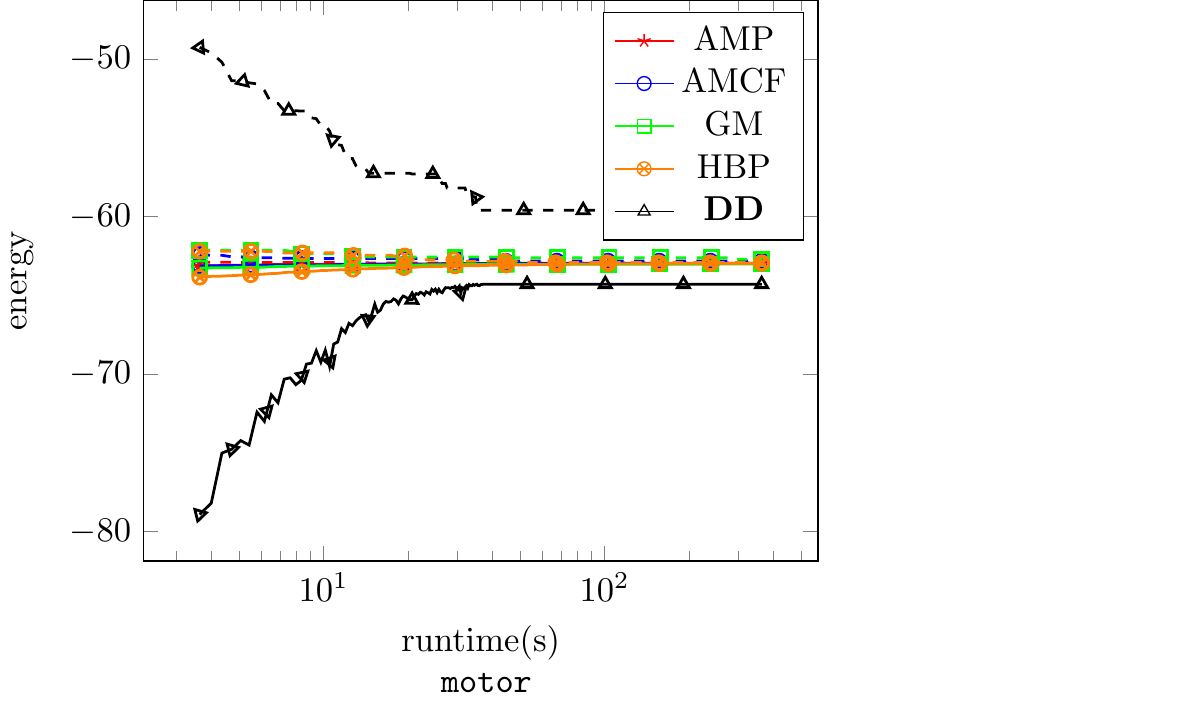}%
\end{minipage}%
\hspace*{0.8cm}%
\begin{minipage}{0.3\textwidth}%
\vspace{0pt}%
  \includegraphics[width=1.5\textwidth]{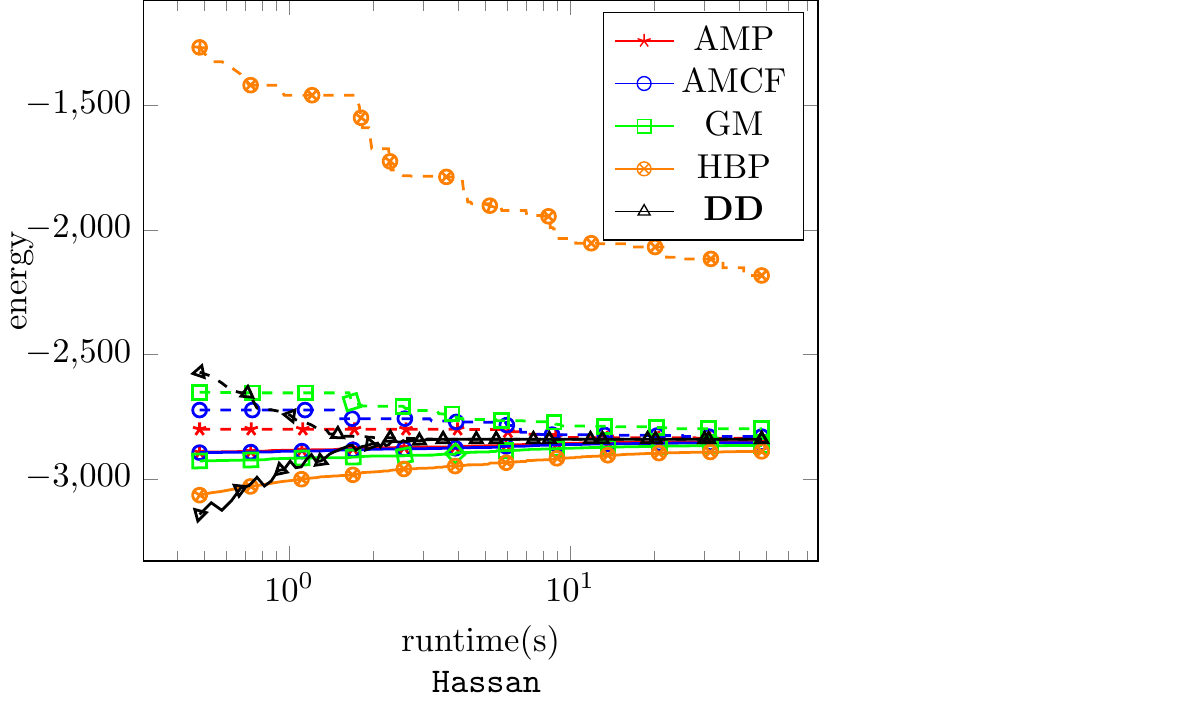}%
\end{minipage}%
\hspace*{0.4cm}%
\begin{minipage}{0.3\textwidth}%
\vspace{0pt}%
  \includegraphics[width=1.5\textwidth]{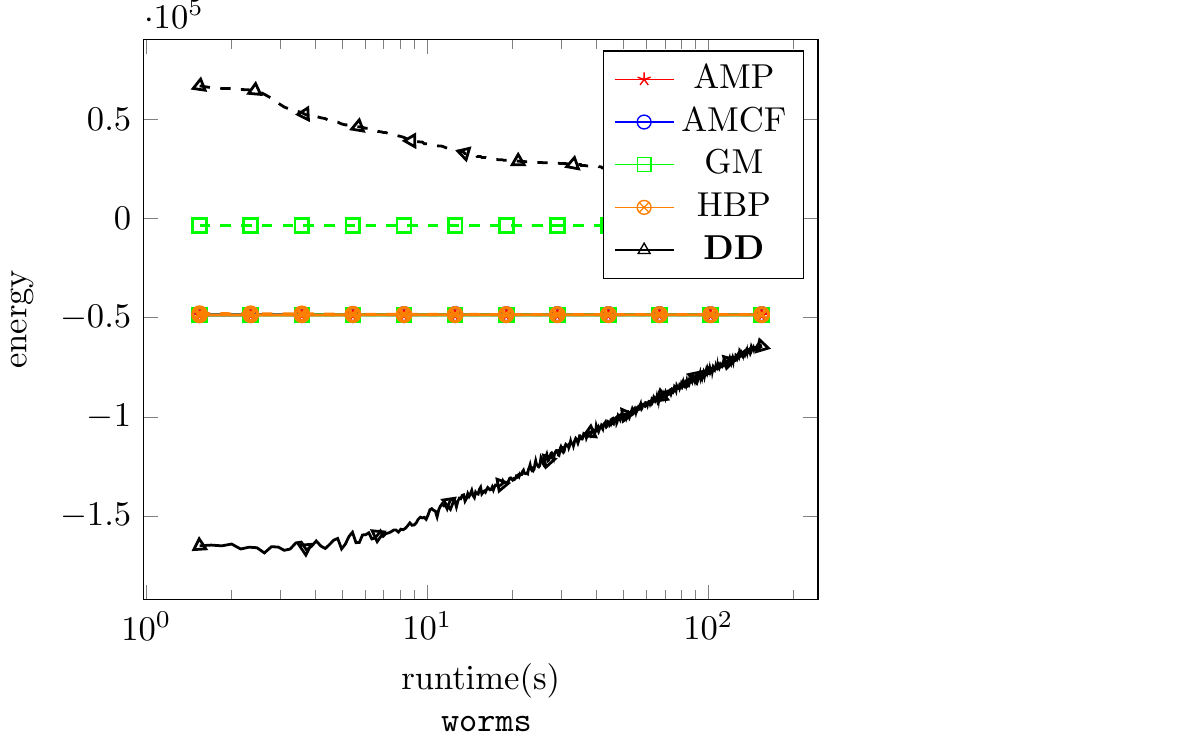}%
\end{minipage}%
  \caption{
    Runtime plots for \texttt{house}, \texttt{hotel}, \texttt{car}, \texttt{motor}, \texttt{graph flow} and \texttt{worms} datasets. 
 Continuous lines denote dual lower bounds and dashed ones primal energies. 
  Values are averaged over all instances of the dataset.
  The x-axis is logarithmic.
  }
\label{fig:DatasetRuntimePlot}
\end{figure*}

\begin{table*}
\centering

}

\end{document}